\newtheorem{theorem}{Theorem}
\newtheorem{lemma}{Lemma}
\newtheorem{property}{Property}
\newtheorem{example}{Example}
\newtheorem{remark}{Remark}
\newtheorem{proposition} {Proposition} 
\theoremstyle{definition}
\newtheorem{defn}{Definition}
\DeclareMathOperator{\rank}{rank}
\DeclareMathOperator{\mat}{Mat}
\DeclareMathOperator{\tr}{trace}
\begin{document}

\title{Separable Quaternion Matrix Factorization for Polarization Images}
\date{}
\author{
Junjun Pan \qquad  Michael K. Ng\thanks{Department of Mathematics,
The University of Hong Kong.
Emails: junjpan@hku.hk, mng@maths.hku.hk. M. Ng's research is supported
 in part by HKRGC GRF 12300218, 12300519, 17201020 and 17300021. } 
}  
\maketitle

\begin{abstract}

Polarization is a unique characteristic of transverse wave and is represented by Stokes parameters. Analysis of polarization states can reveal valuable information about the sources. In this paper, we propose a separable low-rank quaternion linear mixing model to polarized signals:  we assume each column of source factor matrix $\breve{\mathbf{W}}$ equals a column of polarized data matrix $\breve{\mathbf{M}}$ and refer to the corresponding problem as separable quaternion  matrix factorization (SQMF). We discuss some properties of the matrix that can be decomposed by SQMF. To determine the source factor matrix $\breve{\mathbf{W}}$ in quaternion space, we propose a heuristic algorithm called quaternion successive projection algorithm (QSPA) inspired by the successive projection algorithm. To guarantee the effectiveness of QSPA, a new normalization operator is proposed for the quaternion matrix. We use a block coordinate descent algorithm to compute nonnegative factor matrix $\mathbf{H}$ in real number space. We test our method on the applications of polarization image representation and spectro-polarimetric imaging unmixing to verify its effectiveness.
\end{abstract}

\textbf{Keywords.}
Polarization, quaternion, polarization image, separability, matrix factorization. 

\section{Introduction} 

Polarization is one of the primary characteristics of transverse waves such as optics \cite{born2013principles}, seismology\cite{aki2002quantitative}, radio\cite{weiler1973synthesis}, and microwaves\cite{kosowsky1999introduction}. It is a unique transverse wave phenomenon that describes the geometrical orientation of the oscillations.  Different materials, the Earth's surface, and atmospheric targets in the process of reflection, scattering, and transmission and emission of electromagnetic radiation will produce their own nature of the  polarization characteristics. Analysis of the polarization states imposed by these processes will potentially reveal useful information about the source. Hence it works well in many applications in remote sensing \cite{lee2017polarimetric,tyo2006review},  agriculture\cite{paloscia1988microwave}, astronomy\cite{kovac2002detection,kamionkowski1997statistics} and many other fields.

For example, hyperspectral imaging is a well-known technique collects and processes spectra intensity information from across the electromagnetic spectrum. But due to the limitations of optical resolution, ground structure, noise and other factors, different materials often exhibit "the same spectra" that are difficult to identify. To address this issue, the polarization modulation technique integrated into spectral imaging that can further lead to a more powerful technique, namely spectro-polarimetric imaging\cite{zhao2009spectropolarimetric, miron2011joint,mu2012static, boccaletti2012spices}. 
The spectro-polarimetric imaging data consists of spatial, frequency and polarization information.
 The emerging technique has been widely used in astronomy \cite{antonucci1984optical,rousselet2000spectro}, climatology\cite{kokhanovsky2015space}, medical image \cite{boulbry2006novel}, among others.

The state of polarization can be entirely characterized by the four components of the Stokes vector  \cite{mcmaster1954polarization, berry1977measurement}.  The Stokes parameters were first defined by George Gabriel Stokes in 1852 and used widely to describe the intensity and polarization of light. They can be measured in experiments \cite{gori1999measuring,schaefer2007measuring} and reformulated in quaternion algebra\cite{kuntman2019quaternion}. In the blind polarized source separation problems, low-rank approximation models based on quaternions play an important role and have been studied in many references, see for example \cite{le2004singular,javidi2011fast,via2010quaternion}. 
Most of the research works rely on standard quaternion mixing model assumption, without considering physical properties of light. Until 2020, a quaternion linear mixing model named quaternion nonnegative matrix factorization(QNMF) has been proposed in reference \cite{flamant2020quaternion}, specifically for blind unmixing of spectro-polarimetric data. The QNMF model extends nonnegativity constraints based on  physical properties of light,  and a quaternion alternating least squares (QALS) algorithm is presented accordingly.

Since QNMF is extended from standard NMF, its solution could not be guaranteed to be unique in most cases. However, the uniqueness of a factorization is crucial in practice. For the NMF problem,  additional constraints on the factor matrix are usually imposed to guarantee its uniqueness. Inspired by separable NMF, in this paper, we introduce separability into QNMF model, assuming that each column of source factor matrix  equals a column of the polarized data matrix. We refer to it as separable quaternion matrix factorization (SQMF).

The separability assumption makes sense in practical situations. For example \cite{ma2013signal}, in hyperspectral unmixing, each column of the data matrix is the spectral signature of a pixel. Separability, also known as pure-pixel assumption, means that for each material (i.e., endmember) within the hyperspectral image, there exists a pixel that only contains that material.  
For more separable NMF models, we refer the interested reader to \cite{gillis2020nonnegative,pan2019generalized,pan2021co}.

Similarly, the SQMF model requires in the application of spectro-polarimetric unmixing, that each column of the data matrix consists of the spectral and polarization signature of a pixel. And for any pixels, data can be represented as a linearly weighted combination of several elementary sources. For each source corresponding to a material, there exists a pixel that only contains that source (material).  Here the separability in spectro-polarimetric imaging can be seen as a natural extension of pure-pixel assumption in hyperspectral imaging.

\subsection{Contributions and outline of this paper }

This paper considers the quaternion matrix factorization under separability conditions, referred to as  separable quaternion matrix factorization (SQMF). The contributions are summarized as follows.

1) To investigate the SQMF problem, we present several properties of SQ-matrices that can be decomposed by SQMF. We show the relationship between SQ-matirces and its corresponding intensity (real) component and polarization (imaginary) component. The uniqueness of SQMF has been proven from a linear algebra perspective. 

2) In the SQMF problem, to find the source matrix in quaternion space, we propose an
algorithm inspired by the successive projection algorithm
(SPA) \cite{araujo2001successive, gillis2014fast} which we refer to the quaternion successive projection algorithm (QSPA). It only requires O(mnr)
operations. We demonstrate that QSPA can identify the quaternion-valued  factor matrix correctly for SQ-matrices.  To compute the activation factor matrix in real number space, we provide a simple algorithm extended from hierarchical nonnegative least squares.
% named quaternion hierarchical nonnegative least squares (QHNLS) 

3) We test the proposed method in the polarization image representation and spectro-polarized imaging unmixing. Compared to state-of-the-art techniques, SQMF can capture the main characteristic of polarized data and attain a high approximation to the data. Its computational time is very competitive. 

The rest paper is organized as follows.
In Section 2, we briefly review quaternion,  polarization and Stokes parameters. Section 3 introduces the separable quaternion  matrix factorization problem and presents some related properties. We also discuss the uniqueness of the SQMF problem. 
Section 4 proposes the quaternion successive projection algorithm to determine column subset to form the quaternion factor matrix, and quaternion  hierarchical nonnegative least squares to compute the real factor matrix. Section 5 validates the effectiveness of the proposed method in numerical experiments on realistic polarization image and simulated spectro-polarimetric data set. 
The concluding remarks are given in Section 6.

\section{Preliminaries}

Let us start with notations used throughout this paper.  The real number field, the complex number field and the quaternion algebra are defined by $\mathbb{R}$, $\mathbb{C}$ and $\mathbb{H}$ respectively. Unless otherwise specified,
lowercase letters represent real
numbers, for example, $a\in \mathbb{R}$. The bold lowercase letters represent real vectors, such as, $\mathbf{a}\in \mathbb{R}^n$. Real matrices are denoted by bold capital letters, like $\mathbf{A} \in \mathbb{R}^{m\times n}$. The numbers, vectors, and matrices under the quaternion field are represented by the corresponding symbols wearing a hat, for example $ \breve{a} \in \mathbb{H}$,  $\mathbf{\breve{a}}\in \mathbb{H}^n$ and $\mathbf{\breve{A}}\in \mathbb{H}^{m\times n}$.

\subsection{Quaternion}
Quaternions $\mathbb{H}$ are generally represented in the following Cartesian form,
$$ 
\breve{q}=q_0+\mathtt{i}q_{1}+\mathtt{j}q_2+\mathtt{k}q_3,
$$
where $q_0,q_1,q_2,q_3 \in \mathbb{R}$, and $\mathtt{i},\mathtt{j},\mathtt{k}$ are imaginary units such that 
 $$\mathtt{i}^2=\mathtt{j}^2=\mathtt{k}^2=-1,  \quad \mathtt{i}\mathtt{j}=-\mathtt{j}\mathtt{i}=\mathtt{k}, \quad \mathtt{j}\mathtt{k}=-\mathtt{k} \mathtt{j}= \mathtt{i},\quad \mathtt{k}\mathtt{i}=-\mathtt{i}\mathtt{k}=\mathtt{j}. $$
Any quaternion $\breve{q}$ can be simply written as $\breve{q}=\textit{Re}~ \breve{q}+ \textit{Im}~\breve{q}$ with real component $\textit{Re}~\breve{q}= q_0$ and imaginary component $\textit{Im}~\breve{q}= \mathtt{i}q_1+ \mathtt{j}q_2+\mathtt{k}q_3$. The quaternion conjugate $\bar{\breve{q}}$ and the modulus $|\breve{q}|$ of $\breve{q}$ are defined as 
$$\bar{\breve{q}}\doteq\textit{Re}~\breve{q} -\textit{Im}~\breve{q}=q_0-\mathtt{i} q_1-\mathtt{j}q_2-\mathtt{k}q_3,\quad  |\breve{q}|\doteq \sqrt{\breve{q}\bar{\breve{q}}}=\sqrt{q_0^2+q_1^2+q_2^2+q_3^2}.$$ 
%Consider quaternion matrix $\breve{\mathbf{Q}}=(\breve{q}_{ij})\in \mathbb{H}^{m\times n}$, denote its transpose $\breve{\mathbf{Q}}^T=(\breve{q}_{ji})\in \mathbb{H}^{n\times m}$ and its conjugate-transpose $\bar{\breve{\mathbf{Q}}}^T=(\bar{\breve{q}}_{ji})\in \mathbb{H}^{n\times m}$.

In the paper, we use $\mathcal{S}_l(\cdot)$, $l=0,1,2,3$ to extract the corresponding components of a quaternion number, vector or matrix, for instance, $\mathcal{S}_0(\breve{q})=q_0$,   $\mathcal{S}_3(\breve{q})=q_3$. 
For the sake of simplifying notations, we will use $\mathcal{S}_l(\cdot)$ to represent quaternions in the rest of paper. Precisely, quaternion number 
 $\breve{q}=\mathcal{S}_0(\breve{q})+\mathtt{i}\mathcal{S}_1(\breve{q})+\mathtt{j}\mathcal{S}_2(\breve{q})+\mathtt{k}\mathcal{S}_3(\breve{q})\in \mathbb{H}$, quaternion vector $\breve{\mathbf{q}}=\mathcal{S}_0(\breve{\mathbf{q}})+\mathtt{i}\mathcal{S}_1(\breve{\mathbf{q}})+\mathtt{j}\mathcal{S}_2(\breve{\mathbf{q}})+\mathtt{k}\mathcal{S}_3(\breve{\mathbf{q}})\in \mathbb{H}^m$, and quaternion matrix
$\breve{\mathbf{Q}}=\mathcal{S}_0(\breve{\mathbf{Q}})+\mathtt{i}\mathcal{S}_1(\breve{\mathbf{Q}})+\mathtt{j}\mathcal{S}_2(\breve{\mathbf{Q}})+\mathtt{k}\mathcal{S}_3(\breve{\mathbf{Q}})\in \mathbb{H}^{m\times n}$.

We define the inner product of two quaternion vectors as follows. Given $\breve{\mathbf{a}} \in \mathbb{H}^{m}$, $\breve{\mathbf{b}} \in \mathbb{H}^{m}$,  
Their inner product is defined as,
\begin{equation}\label{inner}
\langle \breve{\mathbf{a}},  \mathbf{\breve{b}}\rangle\doteq \sum^3_{l=0}\mathcal{S}^T_l(\breve{\mathbf{a}}) \mathcal{S}_l(\breve{\mathbf{b}}).
\end{equation} 
And the $l_2$ norm of any quaternion vector $\breve{\mathbf{q}} \in \mathbb{H}^{m}$ is given below,
\begin{equation}\label{norm2}
\|\breve{\mathbf{q}}\|_2 \doteq \langle\breve{\mathbf{q}}, \breve{\mathbf{q}}\rangle^{\frac{1}{2}}.
\end{equation} 
We use $\mat(\cdot)$ to represent all the components of a quaternion vector $\breve{\mathbf{q}}\in \mathbb{H}^m$ or matrix $\breve{\mathbf{Q}}\in \mathbb{H}^{m\times n}$, that is
\begin{eqnarray*}
\mat(\breve{\mathbf{q}})&=& \big(
           \begin{array}{cccc}      
\mathcal{S}_0(\breve{\mathbf{q}}) & \mathcal{S}_1(\breve{\mathbf{q}}) & \mathcal{S}_2(\breve{\mathbf{q}}) &\mathcal{S}_3(\breve{\mathbf{q}}) \end{array}
         \big)\in \mathbb{R}^{m\times 4}, \\
 \mat(\breve{\mathbf{Q}})&=&\big(
           \begin{array}{cccc}      
\mathcal{S}^T_0(\breve{\mathbf{Q}}) & \mathcal{S}^T_1(\breve{\mathbf{Q}}) & \mathcal{S}^T_2(\breve{\mathbf{Q}}) &\mathcal{S}^T_3(\breve{\mathbf{Q}})
 \end{array}
         \big)^T\in \mathbb{R}^{4m\times n}.
\end{eqnarray*} 
It is easy to verify that 
\begin{equation*}
\|~\breve{\mathbf{q}}~ \|_2=\|\mat(\breve{\mathbf{q}})\|_F
\end{equation*} 
 
In the following, we present an inequality based on the $l_2$ norm of quaternion vector which is useful in the paper.
\begin{lemma}\label{lem:q-norm}
For any quaternion matrix $\breve{\mathbf{Q}}\doteq [\breve{\mathbf{q}}_1,\cdots,\breve{\mathbf{q}}_n]\in \mathbb{H}^{m\times n}$, and any real vector $\mathbf{h}=[h_1,\cdots,h_n]^T\in \mathbb{R}^n$, 
$$
\|\breve{\mathbf{Q}}\mathbf{h}\|_2\leq \|\mathbf{h}\|_1\max_t\|\breve{\mathbf{q}}_t\|_2
$$
\end{lemma}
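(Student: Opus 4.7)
The plan is to reduce the inequality to the ordinary triangle inequality for the real Frobenius norm by exploiting the identity $\|\breve{\mathbf{q}}\|_2 = \|\mat(\breve{\mathbf{q}})\|_F$ stated just above the lemma. The first step is to expand the matrix-vector product as a real linear combination of the columns of $\breve{\mathbf{Q}}$:
$$
\breve{\mathbf{Q}}\mathbf{h} \;=\; \sum_{t=1}^n h_t\,\breve{\mathbf{q}}_t.
$$
Because each $h_t$ is a real scalar, it commutes with the imaginary units, so $\mathcal{S}_l(h_t\breve{\mathbf{q}}_t) = h_t\,\mathcal{S}_l(\breve{\mathbf{q}}_t)$ for $l=0,1,2,3$, and consequently $\mat(h_t\breve{\mathbf{q}}_t) = h_t\,\mat(\breve{\mathbf{q}}_t)$. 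This already yields the homogeneity relation $\|h_t\breve{\mathbf{q}}_t\|_2 = |h_t|\,\|\breve{\mathbf{q}}_t\|_2$ via the $\mat$--Frobenius identity.

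Next I would pass the sum through $\mat(\cdot)$, which is a real-linear map, and use the triangle inequality for $\|\cdot\|_F$ on $\mathbb{R}^{4m}$:
$$
\left\|\,\sum_{t=1}^n h_t\,\breve{\mathbf{q}}_t\,\right\|_2 \;=\; \left\|\,\sum_{t=1}^n h_t\,\mat(\breve{\mathbf{q}}_t)\,\right\|_F \;\leq\; \sum_{t=1}^n |h_t|\,\|\mat(\breve{\mathbf{q}}_t)\|_F \;=\; \sum_{t=1}^n |h_t|\,\|\breve{\mathbf{q}}_t\|_2.
$$
Finally, bounding every column norm by the maximum and factoring it out gives
$$
\sum_{t=1}^n |h_t|\,\|\breve{\mathbf{q}}_t\|_2 \;\leq\; \left(\sum_{t=1}^n |h_t|\right)\max_{t}\|\breve{\mathbf{q}}_t\|_2 \;=\; \|\mathbf{h}\|_1\max_{t}\|\breve{\mathbf{q}}_t\|_2,
$$
which is the desired bound.

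There is no real obstacle here; the only subtlety worth checking is that $\mathbf{h}$ is assumed real, so no issues arise from the non-commutativity of quaternion multiplication when pulling $h_t$ outside the norm. Had $\mathbf{h}$ been quaternion-valued, the step $\|h_t\breve{\mathbf{q}}_t\|_2 = |h_t|\|\breve{\mathbf{q}}_t\|_2$ would still hold (since $|\breve{a}\breve{b}|=|\breve{a}||\breve{b}|$ on $\mathbb{H}$), but real scalars make the argument completely transparent through the $\mat$ isomorphism.
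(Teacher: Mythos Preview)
Your proof is correct and follows exactly the same route as the paper: expand $\breve{\mathbf{Q}}\mathbf{h}$ as $\sum_t h_t\breve{\mathbf{q}}_t$, pass to $\mat(\cdot)$ and the Frobenius norm, apply the triangle inequality, and bound by $\|\mathbf{h}\|_1\max_t\|\breve{\mathbf{q}}_t\|_2$. The only difference is that you spell out the justification for the real-linearity of $\mat$ and the homogeneity step, which the paper leaves implicit.
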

\begin{proof}
\begin{eqnarray*}
\|\breve{\mathbf{Q}}\mathbf{h}\|_2=\|\sum^n_{t=1}\breve{\mathbf{q}}_th_t\|_2 
=\|\sum^n_{t=1}h_t \mat(
          \breve{\mathbf{q}}_t)\|_F  \leq  \sum^n_{t=1}|h_t|\|\mat(\breve{\mathbf{q}}_t)\|_F = \sum^n_{t=1}|h_t| \|\breve{\mathbf{q}}_t\|_2\leq \|\mathbf{h}\|_1\max_t\|\breve{\mathbf{q}}_t\|_2.
\end{eqnarray*}
\end{proof}
\subsection{Stokes parameters}

Stokes parameters $S_0, S_1, S_2, S_3$ are used to describe the polarization state of electromagnetic radiation.  The first parameter $S_0\geq 0$ represents the total intensity of light, equals sum of the intensity of polarized and un-polarized light.  Parameters $S_1, S_2, S_3$ describe the polarization ellipse of light, specifically,  
 \begin{eqnarray}\label{stokes}
S_{1}&=&\phi S_0 \cos 2\psi \cos 2\chi, \nonumber \\
S_{2}&=&\phi S_0 \sin 2\psi \cos 2\chi, \\
S_{3}&=&\phi S_0 \sin 2\chi. \nonumber 
 \end{eqnarray}
$\phi S_0$ , $2\psi$  and $2\chi$ are the spherical coordinates of the three-dimensional vector of cartesian coordinates $(S_{1},S_{2},S_{3})$. $\phi$ is the degree of polarization, the metric of relative contribution of polarized part to the total intensity, 
\begin{equation*}
\phi=\frac{\sqrt{S^2_1+S^2_2+S^2_3}}{S_0}.
\end{equation*}
$\phi\in [0,1]$ implies the polarization state.  $\phi=1$ indicates fully polarized, and $\phi=0$  means un-polarized. For $\phi\in (0,1)$, we say it is partially polarized. 

These four Stokes parameters $(S_0,S_1,S_2,S_3)$ can be expressed by using quaternion algebra  as 
$$
\breve{q}=S_0+\mathtt{i}S_1+\mathtt{j}S_2+\mathtt{k}S_3\in \mathbb{H}.
$$
Due to the physical meaning indicated in \eqref{stokes}, the Stokes vector $(S_0,S_1,S_2,S_3)^T\in \mathbb{R}^{4}$ satisfies that 
\begin{equation}\label{S_cond}
S_0\geq 0 \quad and \quad S^2_1+S^2_2+S^2_3\leq S^2_0.
\end{equation}
Worth noting that, condition \eqref{S_cond} is equivalent to the positive semi-definiteness of a $2 \times 2$ Hermitian matrix $\mathbf{J}$,
$$
\mathbf{J}=\frac{1}{2}\left(
           \begin{array}{cc}
             S_0+S_2 & S_3+\mathtt{i}S_1  \\
            S_3-\mathtt{i}S_1   & S_0-S_2\\
           \end{array}
         \right)\in \mathbb{C}^{2\times 2}.
$$
More precisely, \eqref{S_cond} $\Leftrightarrow$ $\tr{(\mathbf{J})}\geq 0$ and $det{(\mathbf{J})} \geq 0$. The condition \eqref{S_cond} implies, for any quaternion $\breve{q}\in \mathbb{H}$ satisfies  $\textit{Re}~\breve{q}\geq 0$ and $|\textit{Im}~\breve{q}|^2 \leq (\textit{Re}~\breve{q})^2$. Define the set of constrained quaternions \cite{flamant2020quaternion} $\mathbb{H}_S\subset \mathbb{H}$ such that
\begin{equation}
\mathbb{H}_S \doteq \{\breve{q}\in \mathbb{H}|\textit{Re}~\breve{q} \geq 0 ~~ and ~~ |\textit{Im}~\breve{q}|^2 \leq (\textit{Re}~\breve{q})^2\}.
\end{equation}

For more details on polarization and Stokes parameters, we refer the interested reader to \cite{gil2017polarized} and the references therein.

\section{Definition and Properties}

In this section, we will introduce a linear mixing model for polarized matrix $\breve{\mathbf{M}}\in \mathbb{H}^{m\times n}_S$.  
It is known that each pixel of a polarization image contains light intensity and its polarization information, 
$$
\breve{m}_{ij}=\mathcal{S}_0(\breve{m}_{ij})+\mathtt{i}\mathcal{S}_1(\breve{m}_{ij})+\mathtt{j}\mathcal{S}_2(\breve{m}_{ij})+\mathtt{k}\mathcal{S}_3(\breve{m}_{ij})\in \mathbb{H}_{S}.
$$ 
To illustrate the motivation of the separable quaternion  model,  we will use spectro-polarimetric imaging as an example.  

As we know, in the application of hyperspectral unmixing, the linear mixing model for hyperspectral imaging (HSI) data assumes that the spectrum of a pixel is a linear weighted combination of the pure spectra (endmembers) of the components present in that pixel. One commonly used assumption in endmember extraction is the presence of pure pixels in the given images, i.e., for each distinct material, at least one pixel (i.e., spectral signature) exists in the image only contains that material. 
We note that if the imaginary components of all pixels are equal to zero, all lights are unpolarized. The  spectro-polarimetric image will degenerate to (HSI) data.  From the physical interpretation of light intensity and its polarization, spectro-polarimetric imaging is supposed to share similar material and structural properties as HSI. We consider pure pixel assumption in the linear mixing model for spectro-polarimetric data represented as quaternion form.

Accordingly, we will generalize the standard separable nonnegative model to the quaternion matrix factorization model. Formally, 
\begin{defn}\label{def:QSMF}
$\breve{\mathbf{M}}\in \mathbb{H}^{m\times n}_S$ is  separable if there exist $\breve{\mathbf{W}}\in \mathbb{H}^{m\times n}_S $  and nonnegative matrix $\mathbf{H} \in \mathbb{R}^{r\times n}_+$  such that
\begin{equation}\label{sep}
\breve{\mathbf{M}}= \breve{\mathbf{W}}\mathbf{H},
\end{equation}
and each column of $\breve{\mathbf{W}}$ is a column of $\breve{\mathbf{M}}$. Precisely,  $\breve{\mathbf{W}}=\breve{\mathbf{M}}(:,\mathcal{K})$,
where  $\mathcal{K} $ is column set of cardinality $r$ of $\breve{\mathbf{M}}$.
\end{defn}
For simplicity, we call a quaternion matrix $\breve{\mathbf{M}}$ the separable quaternion matrix (SQ-matrix) if it has the form of \eqref{sep}.  Note that the aim of the  separable quaternion factorization is to find the minimal number of source columns $\breve{\mathbf{W}}$ from the given $\breve{\mathbf{M}}$,  $r$ here refers to the minimal number of columns in $\breve{\mathbf{W}}$.

As we know that every entry of the quaternion matrix $\breve{\mathbf{M}}$ is in the constrained quaternion set $\mathbb{H}_S$, it is necessary to investigate the plausibility of the assumption in SQMF model \eqref{sep}. The result is shown in the following proposition. 
\begin{proposition}\label{Prop:1}
If $ \breve{\mathbf{W}}\in \mathbb{H}^{m\times r}_S$, $\mathbf{H} \in \mathbb{R}^{r\times n}_+$, $\breve{\mathbf{M}}=\breve{\mathbf{W}}\mathbf{H}$, then $\breve{\mathbf{M}}\in \mathbb{H}^{m\times n}_S$.
\end{proposition}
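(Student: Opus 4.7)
The plan is to verify the two defining conditions of $\mathbb{H}_S$ entrywise on $\breve{\mathbf{M}}$. Fix an arbitrary entry $\breve{m}_{ij} = \sum_{t=1}^{r} \breve{w}_{it} h_{tj}$. Since $h_{tj} \in \mathbb{R}$, scalar multiplication by $h_{tj}$ distributes across the real and imaginary parts of $\breve{w}_{it}$, which is what makes the argument clean. So I would split the proof into checking (i) $\mathrm{Re}\,\breve{m}_{ij} \geq 0$ and (ii) $|\mathrm{Im}\,\breve{m}_{ij}|^2 \leq (\mathrm{Re}\,\breve{m}_{ij})^2$.

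For (i), I simply note that $\mathrm{Re}\,\breve{m}_{ij} = \sum_t h_{tj}\,\mathrm{Re}\,\breve{w}_{it}$ is a sum of products of nonnegatives, using $\breve{w}_{it}\in\mathbb{H}_S$ and $h_{tj}\geq 0$. For (ii), I view $\mathrm{Im}\,\breve{w}_{it}$ as a vector in $\mathbb{R}^3$ (via its $\mathtt{i},\mathtt{j},\mathtt{k}$ components), so the modulus $|\cdot|$ is the Euclidean norm. Then
\[
|\mathrm{Im}\,\breve{m}_{ij}| = \Bigl|\sum_{t} h_{tj}\,\mathrm{Im}\,\breve{w}_{it}\Bigr| \;\leq\; \sum_{t} h_{tj}\,|\mathrm{Im}\,\breve{w}_{it}| \;\leq\; \sum_{t} h_{tj}\,\mathrm{Re}\,\breve{w}_{it} \;=\; \mathrm{Re}\,\breve{m}_{ij},
\]
where the first step is the triangle inequality (permissible because $h_{tj}\geq 0$), and the second uses $|\mathrm{Im}\,\breve{w}_{it}|\leq \mathrm{Re}\,\breve{w}_{it}$ from $\breve{w}_{it}\in\mathbb{H}_S$. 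Squaring gives (ii).

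There is no serious obstacle here; the only delicate point to state carefully is that the triangle inequality applies to $|\mathrm{Im}\,(\cdot)|$ treated as a norm on a three-dimensional real vector space, and this requires the weights $h_{tj}$ to be real and nonnegative — precisely the hypothesis $\mathbf{H}\in\mathbb{R}_+^{r\times n}$. If $\mathbf{H}$ were allowed to be quaternion-valued, the reasoning would fail because multiplication by a quaternion mixes the real and imaginary parts of $\breve{w}_{it}$, and no such simple triangle-inequality bound would be available. Since both (i) and (ii) hold for every $(i,j)$, we conclude $\breve{\mathbf{M}}\in\mathbb{H}_S^{m\times n}$.
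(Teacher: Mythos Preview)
Your proof is correct and follows essentially the same approach as the paper: verify $\mathrm{Re}\,\breve{m}_{ij}\geq 0$ directly, then bound $|\mathrm{Im}\,\breve{m}_{ij}|$ by $\mathrm{Re}\,\breve{m}_{ij}$ using the norm structure on the imaginary part. The only cosmetic difference is that the paper expands $\sum_{l=1}^3\mathcal{S}_l^2(\breve{m}_{ij})$ explicitly and bounds the cross terms via Cauchy--Schwarz, which is just an unpacked proof of the triangle inequality you invoke in one line.
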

\begin{proof}
Let $\breve{m}_{ij}=\mathcal{S}_0(\breve{m}_{ij})+\mathtt{i}\mathcal{S}_1(\breve{m}_{ij})+\mathtt{j}\mathcal{S}_2(\breve{m}_{ij})+\mathtt{k}\mathcal{S}_3(\breve{m}_{ij})$, $\breve{w}_{ij}=\mathcal{S}_0(\breve{w}_{ij})+\mathtt{i}\mathcal{S}_1(\breve{w}_{ij})+\mathtt{j}\mathcal{S}_2(\breve{w}_{ij})+\mathtt{k}\mathcal{S}_3(\breve{w}_{ij})$.
From $\breve{m}_{ij}=\sum^r_{t=1}\breve{w}_{it}h_{tj}$, we have
$$
\mathcal{S}_0(\breve{m}_{ij})=\sum\limits^r_{t=1}\mathcal{S}_0(\breve{w}_{it})h_{tj}, \quad 
\mathcal{S}_l(\breve{m}_{ij})=\sum\limits^r_{t=1}\mathcal{S}_l(\breve{w}_{it})h_{tj}, \quad l=1,2,3.
$$
Easy to know that $\mathcal{S}_0(\breve{m}_{ij}) \geq 0 $. To prove $\breve{m}_{ij}\in \mathbb{H}_S$, in the following, we will show that $\sum\limits^3_{l=1}\mathcal{S}_l^2(\breve{m}_{ij})\leq \mathcal{S}_0^2(\breve{m}_{ij})$.
As we know, 
\begin{eqnarray*}
\sum^3_{l=1}\mathcal{S}^2_l(\breve{m}_{ij})&=&\sum^3_{l=1}\big(\mathcal{S}_l(\breve{w}_{i1})h_{1j}+\mathcal{S}_l(\breve{w}_{i2})h_{2j}+\cdots+\mathcal{S}_l(\breve{w}_{ir})h_{rj}\big)^2\\
&=& \sum^r_{t=1}h^2_{tj}\big(\sum^3_{l=1}\mathcal{S}^2_l(\breve{w}_{it})\big)+2\sum^r_{t_1<t_2}h_{t_1j}h_{t_2j}\big(\sum^3_{l=1} \mathcal{S}_l(w_{it_1}) \mathcal{S}_l(\breve{w}_{it_2})\big)\\
&\leq & \sum^r_{t=1}h^2_{tj}\mathcal{S}^2_0(\breve{w}_{it})+2\sum^r_{t_1<t_2}h_{t_1j}h_{t_2j}\big(\sum^3_{l=1} \mathcal{S}^2_l(\breve{w}_{it_1}) \big)^{\frac{1}{2}}\big(\sum^3_{l=1}\mathcal{S}^2_l(\breve{w}_{it_2}) \big)^{\frac{1}{2}}\\
&\leq & \sum^r_{t=1}h^2_{tj}\mathcal{S}^2_0(\breve{w}_{it})+2\sum^r_{t_1<t_2}h_{t_1j}h_{t_2j}\mathcal{S}_0(\breve{w}_{it_1}) \mathcal{S}_0(\breve{w}_{it_2})\\
&=& \big(\sum^r_{t=1}h_{tj}\mathcal{S}_0(\breve{w}_{it})\big)^2=\mathcal{S}^2_0(\breve{m}_{ij}).
\end{eqnarray*}
The results follow.
\end{proof}
Proposition \ref{Prop:1} implies that any nonnegative weighted linear combination of quaternions in $\mathbb{H}_S$ also lies in $\mathbb{H}_S$, i.e., $\mathbb{H}_S$ is convex cone. In the following proposition, we will show the connection between a $r$-separable quaternion matrix $\breve{\mathbf{M}}$ and its contained four components.  
 
\begin{property}\label{Prop:equi}
$\breve{\mathbf{M}}\in \mathbb{H}^{m\times n}_S$ is quaternion separable matrix, $\breve{\mathbf{M}}=\breve{\mathbf{M}}(:,\mathcal{K})\mathbf{H}$ if and only if $\mathcal{S}_l(\breve{\mathbf{M}})$ are separable matrices for all $l=0,1,2,3$, that is, there exists $\mathbf{H}_l\in \mathbb{R}^{r_l\times n}_+$, such that
$$\mathcal{S}_l(\breve{\mathbf{M}})=\mathcal{S}_l(\breve{\mathbf{M}})(:,\mathcal{K}_l)\mathbf{H}_l,\quad |\mathcal{K}_l|=r_l\leq r,\quad $$
and at least one column set $\mathcal{K}_l$ contains the others, i.e., for some $p\in \{0,1,2,3\},$
 $$
\bigcup^3_{l=0}\mathcal{K}_{l}= \mathcal{K}_p\doteq
\mathcal{K},
$$
and there exists matrix $\mathbf{Q}\in \mathbb{R}^{r\times (n-r)}_+$, such that
$$
\mathbf{H}_l(:,\bar{\mathcal{K}})=[\mathbf{I}_{r_l},\mathbf{H}_l(:,\mathcal{K}-\mathcal{K}_l)]\Pi \mathbf{Q},
 $$
where $\Pi$ is appropriate permutation matrix, $\bar{\mathcal{K}} \doteq \{1,\cdots,n\}-\mathcal{K}$, and $\mathbf{I}_{r_l}$ is identity matrix of $r_l$. The factor matrices $(\breve{ \mathbf{W}},\mathbf{H}):$
$$
\breve{\mathbf{W}}:~~ \breve{\mathbf{W}}=\breve{\mathbf{M}}(:,\mathcal{K}); \quad \mathbf{H}:~~ \mathbf{H}(:\mathcal{K})=\mathbf{I}_{r_l}, \mathbf{H}(:,\bar{\mathcal{K}})=\mathbf{Q}.
$$
\end{property}
\begin{proof}
$(\Longrightarrow)$ If $\breve{\mathbf{M}}$ is quaternion separable matrix, the results are obviously from Definition  \ref{def:QSMF}.

$(\Longleftarrow)$ Without loss of generality, assume $\mathcal{K}_3=\bigcup\limits^3_{l=0}\mathcal{K}_l=\{1,2,\cdots, r\}\doteq\mathcal{K}$, and $\mathcal{K}_0=\{1,\cdots, r_0\}\subset \mathcal{K}$, with $r_0\leq r$. From 
 $\mathcal{S}_0(\breve{\mathbf{M}})=\mathcal{S}_0(\breve{\mathbf{M}})(:,\mathcal{K}_0)\mathbf{H}_0,$ let $\mathbf{A}_0=\mathcal{S}_0(\breve{\mathbf{M}})(:,\mathcal{K}_0)$, we deduce that
 $$
 \mathcal{S}_0(\breve{\mathbf{M}})=\left(\begin{array}{ccc}
\mathbf{A}_0&\mathbf{A}_0\mathbf{H}^1_0&\mathbf{A}_0\mathbf{H}^2_0\end{array}
            \right),           
$$
where 
$ 
\mathbf{H}^1_0=\mathbf{H}_0(:,\mathcal{K}_3-\mathcal{K}_0)\in \mathbb{R}^{r_0\times (r-r_0)}
 $ and $ 
\mathbf{H}^2_0=\mathbf{H}_0(:,\bar{\mathcal{K}}_3)\in \mathbb{R}^{r_0\times (n-r)}
 $. 
 From $\mathbf{H}^2_0=[\mathbf{I}_{r_0},\mathbf{H}^1_0]\mathbf{Q}$, we have
 $$\mathbf{A}_0\mathbf{H}^2_0=\mathbf{A}_0\mathbf{Q}_1+\mathbf{A}_0\mathbf{H}^1_0\mathbf{Q}_2, \quad \text{where}\quad \mathbf{Q}_1=\mathbf{Q}(\mathcal{K}_0,:)~~ \text{and} ~~ \mathbf{Q}_2=\mathbf{Q}(\mathcal{K}_3-\mathcal{K}_0,:) $$ 
 
 Then
\begin{eqnarray*}
 \mathcal{S}_0(\breve{\mathbf{M}})&=&\left(\begin{array}{ccc}
\mathbf{A}_0&\mathbf{A}_0\mathbf{H}^1_0&\mathbf{A}_0\mathbf{Q}_1+\mathbf{A}_0\mathbf{H}^1_0\mathbf{Q}_2\end{array}
            \right)\\
&=& 
 \left(\begin{array}{cc}
\mathbf{A}_0&\mathbf{A}_0\mathbf{H}^1_0\end{array}
            \right)\left(\begin{array}{ccc}
\mathbf{I}_{r_0}&\mathbf{0}&\mathbf{Q}_1 \\
\mathbf{0}&\mathbf{I}_{r-r_0}&\mathbf{Q}_2 \\
\end{array}
            \right). 
\end{eqnarray*}
\text{Denote} $$ \quad \mathcal{S}_0(\breve{\mathbf{W}})= \left(\begin{array}{cc}
\mathbf{A}_0&\mathbf{A}_0\mathbf{H}^1_0\end{array}
            \right),\quad
\mathbf{H}= \left(\begin{array}{ccc}
\mathbf{I}_{r_0}&\mathbf{0}&\mathbf{Q}_1 \\
\mathbf{0}&\mathbf{I}_{r-r_0}&\mathbf{Q}_2 \\
\end{array}
            \right),       
             $$
we have $$\mathcal{S}_0(\breve{\mathbf{M}})=\mathcal{S}_0(\breve{\mathbf{W}})\mathbf{H}=\mathcal{S}_0(\breve{\mathbf{M}})(:,\mathcal{K})\mathbf{H}.$$ 
Similarly, 
$$\mathcal{S}_1(\breve{\mathbf{M}})=\mathcal{S}_1(\breve{\mathbf{W}})\mathbf{H}=\mathcal{S}_1(\breve{\mathbf{M}})(:,\mathcal{K})\mathbf{H},\quad \mathcal{S}_2(\breve{\mathbf{M}})=\mathcal{S}_2(\breve{\mathbf{W}})\mathbf{H}=\mathcal{S}_2(\breve{\mathbf{M}})(:,\mathcal{K})\mathbf{H}.$$
Note that 
$$\mathcal{S}_3(\breve{\mathbf{M}})=\mathcal{S}_3(\breve{\mathbf{W}})\mathbf{H}=\mathcal{S}_3(\breve{\mathbf{M}})(:,\mathcal{K})\mathbf{H}.$$ 
We then construct $\breve{\mathbf{W}}\in \mathbb{H}^{m\times r}_S$, and $\mathbf{H}\in \mathbb{R}^{r\times n}_+$, such that
$\breve{\mathbf{M}}=\breve{\mathbf{W}}\mathbf{H}$. The results follow.  
\end{proof}
From Property \ref{Prop:equi}, we remark that quaternion matrix $\breve{\mathbf{M}}$ can not be guaranteed to be $r$-separable, even if its all four components are $r$-separable. 
 
The following property shows that quaternion separable matrix factorization admits a unique solution up to permutation and scaling.
\begin{property}[Uniqueness]\label{prop:uniqueness}
$\breve{\mathbf{M}}\in \mathbb{H}^{m\times n}_S$ is $r$-quaternion separable matrix, that is, $\breve{\mathbf{M}}=\breve{\mathbf{W}}\mathbf{H}$, $\breve{\mathbf{W}}=\breve{\mathbf{M}}(:,\mathcal{K})\in \mathbb{H}^{m\times r}_S$, $\mathbf{H}\in \mathbb{R}^{r\times n}_+$. Then $\breve{\mathbf{W}}$ is unique up to permutation and scaling. And if $\mat(\breve{\mathbf{M}})=r$, $\mathbf{H}$ is also unique up to permutation and scaling .
\end{property}
\begin{proof}

$\breve{\mathbf{M}}$ is $r$ quaternion separable matrix, then let  
\begin{equation}\label{XY}
\mathbf{X}= \mat(\breve{\mathbf{M}}) \in \mathbb{R}^{4m\times n},\quad 
\mathbf{Y}=  \mat(\breve{\mathbf{W}}) \in \mathbb{R}^{4m\times r} 
\end{equation} 
From Proposition \ref{Prop:equi},  $\mathbf{X}=\mathbf{Y}\mathbf{H}$ and $\mathbf{Y}=\mathbf{X}(:,\mathcal{K})$. 
If $\mathbf{Y}$ is not unique up to permutation and scaling, there exist $(\mathbf{Y}_1, \mathbf{H}_1)$ and $(\mathbf{Y}_2,\mathbf{H}_2)$ such that
\begin{equation}\label{YH1}
\mathbf{X}=\mathbf{Y}_1\mathbf{H}_1,~~ \mathbf{Y}_1=\mathbf{X}(:,\mathcal{K}),~~\mathbf{H}_1\in \mathbb{R}^{r\times n}_+;
\end{equation}
\begin{equation}\label{YH2}
\mathbf{X}=\mathbf{Y}_2\mathbf{H}_2,~~\mathbf{Y}_2=\mathbf{X}(:,\tilde{\mathcal{K}}),~~\mathbf{H}_2\in \mathbb{R}^{r\times n}_+.
\end{equation}
From \eqref{YH1}, we deduce that $\mathbf{Y}_2=\mathbf{Y}_1\mathbf{V}_1$,  $\mathbf{V}_1\in \mathbb{R}^{r\times r}_+$, and the columns of $\mathbf{V}_1$ are from $\mathbf{H}_1$. Similarly, from \eqref{YH2}, $\mathbf{Y}_1=\mathbf{Y}_2\mathbf{V}_2$, $\mathbf{V}_2\in \mathbb{R}^{r\times r}_+$,  and the columns of $\mathbf{V}_2$ are from $\mathbf{H}_2$. Hence we have,
\begin{equation}
\mathbf{Y}_1(\mathbf{I}_r-\mathbf{V}_1\mathbf{V}_2)=0,\quad \mathbf{Y}_2(\mathbf{I}_r-\mathbf{V}_2\mathbf{V}_1)=0.
\end{equation}
Let $\mathbf{G}=\mathbf{V}_1\mathbf{V}_2$, and assume that $\mathbf{I}_r-\mathbf{G}\neq 0$. Without loss of generality, let the $j$-th column of $\mathbf{G}$ is non zero column, then 
\begin{equation}\label{prop3:eq}
(1-g_{jj})\mathbf{Y}_1(:,j)-\sum^r_{i\neq j}g_{ij}\mathbf{Y}_1(:,i)=0.
\end{equation}
Note that $g_{ij}\neq 0$ for some $i$, and $\mathbf{Y}_1(:,i)\neq 0$ for all $i$, 

If $1-g_{jj}\neq 0$, then 
$$\mathbf{Y}_1(:,j)=\frac{\sum^r_{i\neq j}g_{ij}\mathbf{Y}_1(:,i)}{1-g_{jj}}.$$
Notice that $\mathbf{Y}_1(1:m,:)=\mathcal{S}_0(\breve{\mathbf{W}})\geq 0$, and $\mathbf{G}\geq 0$, easy to get 
$1-g_{jj}>0$.  It implies that $\mathbf{Y}_1(:,j)$ can be represented as linear combination of other columns of $\mathbf{Y}_1$, with nonnegative coefficients. This is  contradict  to minimal $r$-separability of $\breve{\mathbf{M}}$. Hence, $g_{jj}=1$.

From \eqref{prop3:eq}, we get
\begin{equation*} 
\sum^r_{i\neq j}g_{ij}\mathbf{Y}_1(:,i)=0.
\end{equation*}
From the fact of $\mathbf{Y}_1(1:m,:)=\mathcal{S}_0(\breve{\mathbf{W}})\geq 0$, and $\mathbf{G}\geq 0$, we obtain  $g_{ij}=0$ for $i\neq j$.  

Hence $\mathbf{G}=\mathbf{I}_r$,  that is $
\mathbf{V}_1\mathbf{V}_2=\mathbf{I}_r$. Similarly, $\mathbf{V}_2\mathbf{V}_1=\mathbf{I}_r.$  As $\mathbf{V}_1$ and $\mathbf{V}_2$ are nonnegative matrices, $\mathbf{V}_1$ and $\mathbf{V}_2$ can only be permutation matrix or scaling matrix. Therefore, $\mathbf{Y}$ is unique up to permutation and scaling. 

Specifically, if $\rank(\mat(\breve{\mathbf{M}}))=r$, that is $rank(\mathbf{X})=r$, one can easily deduce that  $\mathbf{Y}$ has full column rank, or else it will contradict to assumption of $rank(\mathbf{X})=r$. Since $\mathbf{X}=\mathbf{Y}\mathbf{H}$, we get that $\mathbf{H}$ is unique up to permutation and scaling from fundamental results of matrix equation.
\end{proof}
\begin{remark}
Reference \cite{flamant2020quaternion} indicates that if $\mathcal{S}_0(\breve{\mathbf{M}})$ is $r$-separable,   $\breve{\mathbf{W}}$ is unique up to permutation and scaling. 
Property \ref{prop:uniqueness} extends to
the case that $\mathcal{S}_0(\breve{\mathbf{M}})$ is $r_1$-separable when $r_1<r$.
\end{remark}
We remark that for a quaternion $r$ separable  matrix $\breve{\mathbf{M}}\in \mathbb{H}^{m\times n}_S$, in general, $\mathbf{H}$ is not unique up to permutations and scaling when $r>rank(\mat(\breve{\mathbf{M}}))$. In the following, we will give an example to show this fact. 
\begin{example}
$$
\breve{\mathbf{M}}=\mathcal{S}_0(\breve{\mathbf{M}})+\mathtt{i}\mathcal{S}_1(\breve{\mathbf{M}})+\mathtt{j}\mathcal{S}_2(\breve{\mathbf{M}})+\mathtt{k}\mathcal{S}_3(\breve{\mathbf{M}})
$$
$$
\mathcal{S}_0(\breve{\mathbf{M}})=\left(\begin{array}{ccccc}
1&1&1&0.5&0.9\\
0.5&1&1&0.75&0.85\\
0.5&0.5&1&0.5&0.65
\end{array}
            \right),\quad             
\mathcal{S}_1(\breve{\mathbf{M}})=\left(\begin{array}{ccccc}
0.45&0.4&0.1&0.025&0.245\\
-0.05&0.3&0.4&0.375&0.275\\
0.15&0.15&0.25&0.125&0.175
\end{array}
\right),     
$$   
 $$
 \mathcal{S}_2(\breve{\mathbf{M}})=\frac{1}{2}\left(\begin{array}{ccccc}
-0.1&0.2&0.3&0.3&0.19\\
0.04&-0.5&-0.6&-0.57&-0.436\\
0.07&0.08&0.9&0.455&0.399
\end{array}
            \right),~          
\mathcal{S}_3(\breve{\mathbf{M}})=\frac{1}{2}\left(\begin{array}{ccccc}
0.1&-0.4&0.7&0.1&0.13\\
-0.02&0.5&0.8&0.66&0.518\\
0.03&0.06&0.9&0.465&0.387
\end{array}
            \right). 
 $$
$\breve{\mathbf{M}}\in \mathbb{H}^{3\times 5}_S$ is a 4-separable quaternion matrix, and $\rank(\mat(\breve{\mathbf{M}}))=3$. We can get that $\breve{\mathbf{M}}=\breve{\mathbf{W}}\mathbf{H}$, with
$
\breve{\mathbf{W}}=\breve{\mathbf{M}}(:,1:4)
$. But $\mathbf{H}$ can be
$$
\mathbf{H}=\left(\begin{array}{ccccc}
1&0&0&0&0.1\\
0&1&0&0&0.4\\
0&0&1&0&0.4\\
0&0&0&1&0\\
\end{array}
\right),~~\text{or}~~
\mathbf{H}=\left(\begin{array}{ccccc}
1&0&0&0&0.3\\
0&1&0&0&0.2\\
0&0&1&0&0.2\\
0&0&0&1&0.4\\
\end{array}
\right).
$$
\end{example}

\section{Algorithms} 

This section proposes quaternion successive projection algorithm extended from SPA, to identify the column set $\mathcal{K}$ from a given matrix $\breve{\mathbf{M}}$. The extracted columns will form the source matrix $\breve{\mathbf{W}}$. For solving activation matrix $\mathbf{H}$, a quaternion hierarchical nonnegative least squares method is proposed. 

\subsection{Quaternion SPA}

Inspired from a well-known separable NMF algorithm named successive projection algorithm (SPA), we design a fast algorithm specifically for  separable quaternion matrix factorization. SPA was introduced in~\cite{araujo2001successive} regarding spectral unmixing and has been well discussed in~\cite{ma2014signal, gillis2014and}.
Moreover, it is robust in the presence of noise~\cite{gillis2014fast}. 
SPA assumes that the input matrix $\mathbf{M}\in \mathbb{R}^{m\times n}$ has similar form as \eqref{def:QSMF}, that is
$\mathbf{M}= \mathbf{M}(:,\mathcal{K}) [\mathbf{I}_r, \mathbf{H}'] \Pi$,
where $\Pi$ is a permutation matrix and $\mathbf{H}' \geq 0$ and $||\mathbf{H}'(:,t)||_1 \leq 1$ for all~$t$. The main idea behind SPA which sequentially identifies the columns in $\mathcal{K}$ is given as follows:
at each step, it first determines  the column of $\mathbf{M}$ that has the largest $\ell_2$ norm,  and then projects all columns of $\mathbf{M}$ onto the orthogonal complement of the extracted column.

To start successive projection algorithm on quaternion field, given a quaternion separable matrix $\breve{\mathbf{M}}\in \mathbb{H}^{m\times n}_S$, there exists an index set $\mathcal{K}$ of cardinality $r$, and some $\mathbf{H}\in \mathbb{R}^{r\times n}_+$, such that 
$\breve{\mathbf{M}} = \breve{\mathbf{W}}\mathbf{H}$, where $\breve{\mathbf{W}}=\breve{\mathbf{M}}(:\mathcal{K})$, $\mathbf{H}=[\mathbf{I}_r,\mathbf{U}]\Pi$, with $\mathbf{U}=[\mathbf{u}_1,\cdots,\mathbf{u}_t]\in \mathbb{R}^{r\times t}$ and $t+r=n$. 

Note that the assumption of $||\mathbf{H}'(:,t)||_1 \leq 1$ for all~$t$ is necessary to guarantee SPA identifies the correct column set $\mathcal{K}$. We remark that it also plays a critical role in the proposed quaternion SPA. From Lemma \ref{lem:q-norm}, one can easily verify that the column of largest $\ell_2$ norm of quaternion separable matrix $\breve{\mathbf{M}}$ will be identified from $\mathcal{K}$ if $||\mathbf{u}_t||_1 \leq 1$ for all~$t$. However in general,  $||\mathbf{u}_t||_1  \leq 1$ is not  satisfied.  To address this issue, a normalize operator can be applied on each column of SQ-matrix $\breve{\mathbf{M}}$. How to define a suitable normalize operator for a quaternion matrix will be the key to propose quaternion SPA. 

Suppose that $f: \mathbb{H}^n_S\rightarrow \mathbb{R}_+$ is a suitable normalization function for any $\breve{\mathbf{q}}\in \mathbb{H}^n_S$,    and $\dfrac{\breve{\mathbf{q}}}{f(\breve{\mathbf{q}})}$ is the normalized vector of $\breve{\mathbf{q}}$. Apply the normalize operator on SQ-matrix $\breve{\mathbf{M}}$,  for any $t$,
$$
\dfrac{\breve{\mathbf{W}}\mathbf{u}_t}{f(\breve{\mathbf{W}}\mathbf{u}_t)}=\dfrac{\sum\limits^r_{i=1}\breve{\mathbf{w}}_iu_{it}}{f(\sum^r_{i=1}\breve{\mathbf{w}}_iu_{it})}=\sum^r_{i=1}\dfrac{\breve{\mathbf{w}}_i}{f(\breve{\mathbf{w}}_i)}\dfrac{u_{it}f(\breve{\mathbf{w}}_i)}{f(\sum^r_{i=1}\breve{\mathbf{w}}_iu_{it})}\doteq \sum^r_{i=1}\dfrac{\breve{\mathbf{w}}_i}{f(\breve{\mathbf{w}}_i)} \tilde{u}_{it}.
$$
where $\tilde{u}_{it}\doteq \dfrac{u_{it}f(\breve{\mathbf{w}}_i)}{f(\sum^r_{i=1}\breve{\mathbf{w}}_iu_{it})}$. Let
$\tilde{\mathbf{u}}_t\doteq (\tilde{u}_{it})$, to guarantee $||\tilde{\mathbf{u}}_t||_1  \leq 1$, that is
$ 
\sum\limits^r_{i=1}\dfrac{u_{it}f(\breve{\mathbf{w}}_i)}{f(\sum\limits^r_{i=1}\breve{\mathbf{w}}_iu_{it})}\leq 1.
$ We have, 
\begin{equation}\label{spa:cond}
\sum\limits^r_{i=1}u_{it}f(\breve{\mathbf{w}}_i)\leq f(\sum\limits^r_{i=1}\breve{\mathbf{w}}_iu_{it}).
\end{equation}
Hence,  we define the normalization function  $f(\breve{\mathbf{q}})\doteq\|\mathcal{S}_0(\breve{\mathbf{q}})\|_1$ that can satisfy the condition \eqref{spa:cond}. The quaternion successive projection algorithm is presented as follows.
\begin{algorithm}[H] 
\caption{Quaternion Successive Projection Algorithm \label{algo:alfgm}}
\begin{algorithmic}[1]
\REQUIRE Near-separable quaternion matrix $\breve{\mathbf{M}}=\breve{\mathbf{W}}[I_r,H]\Pi+\breve{\mathbf{N}}\in \mathbb{H}^{m\times n}_S$, where $\Pi$ is a permutation and $\breve{\mathbf{N}}$ is the noise.

\ENSURE Set of $r$ indices $\mathcal{K}$  such that $\breve{\mathbf{M}}(:,\mathcal{K})\approx \breve{\mathbf{W}}$.
 
\STATE  \textbf{Normalization:} $\breve{\mathbf{M}}=\left(       \begin{array}{cccc}\frac{\breve{\mathbf{m}}_1}{\|\mathcal{S}_0(\breve{\mathbf{m}}_1)\|_1},\frac{\breve{\mathbf{m}}_2}{\|\mathcal{S}_0(\breve{\mathbf{m}}_2)\|_1},\cdots,\frac{\breve{\mathbf{m}}_n}{\|\mathcal{S}_0(\breve{\mathbf{m}}_n)\|_1}
\end{array}
            \right). 
$
\STATE   Let $\mathbf{Z}=\breve{\mathbf{M}}$, $\mathcal{K}=\{ \}$.
%$X\leftarrow 0_{n,n}$;
%$Y\leftarrow 0_{m,m}$;

\FOR{$i$ = 1 : r}

\STATE $k=\mathop{\arg\max}_j\|\breve{\mathbf{z}}_j\|_{2}$.

\STATE Compute inner product vector $\mathbf{p}$\\
 $\mathbf{p}\doteq \left( \begin{array}{cccc}
 <\breve{\mathbf{z}}_k, \breve{\mathbf{z}}_1,> &<\breve{\mathbf{z}}_k, \breve{\mathbf{z}}_2,> & \cdots & <\breve{\mathbf{z}}_k, \breve{\mathbf{z}}_n>
 \end{array}
            \right)^T$;

\STATE \textbf{Projection:} $\breve{\mathbf{Z}}=\breve{\mathbf{Z}}-\frac{\breve{\mathbf{z}}_k\mathbf{p}^T}{\|\breve{\mathbf{z}}_k\|^2_{2}} $;

\STATE $\mathcal{K}=\mathcal{K}\cup \{k\}$.

\ENDFOR
\end{algorithmic}
\end{algorithm}

\begin{remark}
Quaternion SPA continues the advantages of the classic SPA, which are listed in the following.
\begin{itemize}
\item Quaternion SPA runs very fast, it can only be implemented in $\mathcal{O}(mnr)$.

\item No parameters need to be tuned.
\end{itemize}
\end{remark}
The correctness of quaternion SPA in the noiseless case is proven in the following theorem in details. 
 \begin{theorem}
Given  $r$-separable quaternion matrix $\breve{\mathbf{M}}\in \mathbb{H}^{m\times n}_S$,  quaternion SPA identifies the column set $\mathcal{K}$ correctly if $\rank(\mat(\breve{\mathbf{M}}))=r$. 
 \end{theorem}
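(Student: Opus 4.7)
My plan is to reduce correctness of quaternion SPA to correctness of classical real SPA, by exploiting the observation that the quaternion inner product in \eqref{inner} and the $\ell_2$ norm in \eqref{norm2} are literally the Euclidean inner product and $\ell_2$ norm on $\mathbb{R}^{4m}$ under the identification $\breve{\mathbf{q}} \leftrightarrow \mat(\breve{\mathbf{q}})$. Consequently every column-wise operation in Algorithm~\ref{algo:alfgm} --- the $\|\cdot\|_2$ argmax in Step~4, the inner-product vector $\mathbf{p}$ in Step~5, and the orthogonal projection in Step~6 --- coincides with the corresponding step of real SPA applied to $\mat(\breve{\mathbf{M}}) = \mat(\breve{\mathbf{W}})\mathbf{H}\in\mathbb{R}^{4m\times n}$.

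The first ingredient is that the normalization step produces a factorization obeying the simplex constraint required by SPA. Writing $\breve{\mathbf{M}}=\breve{\mathbf{W}}[\mathbf{I}_r,\mathbf{U}]\Pi$ with $\mathbf{U}\ge 0$, and using $\mathcal{S}_0(\breve{\mathbf{W}})\ge 0$, I would compute $\|\mathcal{S}_0(\breve{\mathbf{W}}\mathbf{u}_t)\|_1=\sum_i u_{it}\|\mathcal{S}_0(\breve{\mathbf{w}}_i)\|_1$. Dividing each column of $\breve{\mathbf{M}}$ by $\|\mathcal{S}_0(\breve{\mathbf{m}}_j)\|_1$ and each $\breve{\mathbf{w}}_i$ by $\|\mathcal{S}_0(\breve{\mathbf{w}}_i)\|_1$ then yields coefficients $\tilde u_{it}$ with $\sum_i \tilde u_{it}=1$, so the normalized factorization $\breve{\mathbf{M}}'=\breve{\mathbf{W}}'[\mathbf{I}_r,\tilde{\mathbf{U}}]\Pi$ satisfies $\|\tilde{\mathbf{u}}_t\|_1=1$ for every $t$. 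The normalization is well-defined because $\mathcal{S}_0(\breve{\mathbf{w}}_i)=\mathbf{0}$ would force $\breve{\mathbf{w}}_i=\mathbf{0}$ by the definition of $\mathbb{H}_S$, contradicting the minimality of $r$.

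The second ingredient is that the rank hypothesis $\rank(\mat(\breve{\mathbf{M}}))=r$ combined with $\mat(\breve{\mathbf{M}})=\mat(\breve{\mathbf{W}})\mathbf{H}$ forces $\mat(\breve{\mathbf{W}})\in\mathbb{R}^{4m\times r}$ to have full column rank. With this, I would run an induction on $r$: at each iteration Lemma~\ref{lem:q-norm} together with the linear independence of the columns of $\mat(\breve{\mathbf{W}}')$ ensures the argmax in Step~4 is attained at a column indexed by $\mathcal{K}$, because for any $j\notin\mathcal{K}$, $\|\breve{\mathbf{m}}_j'\|_2=\|\breve{\mathbf{W}}'\tilde{\mathbf{u}}_j\|_2\le \|\tilde{\mathbf{u}}_j\|_1 \max_i\|\breve{\mathbf{w}}_i'\|_2 = \max_i\|\breve{\mathbf{w}}_i'\|_2$, with strict inequality unless $\tilde{\mathbf{u}}_j$ is a standard basis vector, which the full-column-rank assumption rules out. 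The projection step then removes one source direction while preserving both the separable structure and the full column rank on the remaining $r-1$ columns of $\breve{\mathbf{W}}'$, so the induction hypothesis propagates to the next iteration.

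The main obstacle I anticipate is carefully handling the induction: one must verify that the projection onto the orthogonal complement of $\breve{\mathbf{z}}_k$ (a) keeps the residual data expressible as a nonnegative linear combination of the residual sources with the simplex constraint intact, and (b) preserves linear independence of the projected source columns in $\mathbb{R}^{4m}$. Both facts are direct transcriptions of the classical SPA correctness argument once the $\mat(\cdot)$ isometry is in place, but they are the load-bearing step that makes the reduction to real SPA actually work.
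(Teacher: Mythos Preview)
Your proposal is correct and follows essentially the same route as the paper: normalize by $\|\mathcal{S}_0(\cdot)\|_1$ to obtain the simplex constraint $\sum_i\tilde u_{ij}=1$, use the norm inequality (Lemma~\ref{lem:q-norm}) to show the argmax falls in $\mathcal{K}$, then verify that orthogonal projection preserves the separable structure with the same coefficients $\tilde{\mathbf{u}}_j$, and induct. Your explicit framing via the isometry $\breve{\mathbf{q}}\leftrightarrow\mat(\breve{\mathbf{q}})$ is a clean way to say that Steps~4--6 of Algorithm~\ref{algo:alfgm} are literally real SPA on $\mat(\breve{\mathbf{M}})$, which the paper carries out by hand rather than invoking as a reduction.

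One small slip: the clause ``which the full-column-rank assumption rules out'' is not quite right. Full column rank of $\mat(\breve{\mathbf{W}})$ does \emph{not} exclude $\tilde{\mathbf{u}}_j=\mathbf{e}_i$ for some $j\notin\mathcal{K}$; it only forces, via the equality case in Lemma~\ref{lem:q-norm}, that equality implies $\tilde{\mathbf{u}}_j$ is a standard basis vector. When that happens, $\breve{\mathbf{m}}_j'=\breve{\mathbf{w}}_i'$, so column $j$ is a duplicate of a source column and selecting it is harmless for the claimed conclusion. The paper handles this the same way (implicitly), so this is a cosmetic fix rather than a gap.
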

\begin{proof}
Without loss of generality, we let the  column set $\mathcal{K}=\{1,2,\cdots r\}$,
$$
\breve{\mathbf{M}}=\left(\begin{array}{cccccccc}
\breve{\mathbf{w}}_1&\breve{\mathbf{w}}_2&\cdots&\breve{\mathbf{w}}_r&|& \breve{\mathbf{W}}\mathbf{u}_1&\cdots&\breve{\mathbf{W}}\mathbf{u}_t
\end{array}
            \right),
$$
where $\breve{\mathbf{W}}=\left(\begin{array}{cccc}
\breve{\mathbf{w}}_1&\breve{\mathbf{w}}_2&\cdots&\breve{\mathbf{w}}_r\end{array}
            \right)$.  After the normalization, 
$$
 \breve{\mathbf{M}}'=\left(\begin{array}{cccccccc}
\frac{\breve{\mathbf{w}}_1}{\|\mathcal{S}_0(\breve{\mathbf{w}}_1)\|_1}&\frac{\breve{\mathbf{w}}_2}{\|\mathcal{S}_0(\breve{\mathbf{w}}_2)\|_1}&\cdots&\frac{\breve{\mathbf{w}}_r}{\|\mathcal{S}_0(\breve{\mathbf{w}}_r)\|_1}&|& \frac{\breve{\mathbf{W}}\mathbf{u}_1}{\|\mathcal{S}_0(\breve{\mathbf{W}}\mathbf{u}_1)\|_1}&\cdots&\frac{\breve{\mathbf{W}}\mathbf{u}_t}{\|\mathcal{S}_0(\breve{\mathbf{W}}\mathbf{u}_t)\|_1}
\end{array}
            \right).
$$
Let $ \breve{\mathbf{w}}'_i\doteq \frac{\breve{\mathbf{w}}_i}{\|\mathcal{S}_0(\breve{\mathbf{w}}_i)\|_1}$,  $i\in \{1,2,\cdots r\}$, and $\breve{\mathbf{W}'}=\left(\begin{array}{cccc}
\breve{\mathbf{w}}'_1&\breve{\mathbf{w}}'_2&\cdots&\breve{\mathbf{w}}'_r\end{array}
            \right)$.  For $j\in \{1,2,\cdots t\}$,
$$
\frac{\breve{\mathbf{W}}\mathbf{u}_j}{\|\mathcal{S}_0(\breve{\mathbf{W}}\mathbf{u}_j)\|_1}=\frac{\sum\limits^r_{i=1}\breve{\mathbf{w}}_iu_{ij}}{\|\mathcal{S}_0(\breve{\mathbf{W}}\mathbf{u}_j)\|_1}=\sum\limits^r_{i=1} \breve{\mathbf{w}}'_i\frac{u_{ij}\|\mathcal{S}_0(\breve{\mathbf{w}}_i)\|_1}{\|\mathcal{S}_0(\breve{\mathbf{W}}\mathbf{u}_j)\|_1}.
$$
Let $\bar{u}_{ij}=\frac{u_{ij}\|\mathcal{S}_0(\breve{\mathbf{w}}_i)\|_1}{\|\mathcal{S}_0(\breve{\mathbf{W}}\mathbf{u}_j)\|_1}$, then
$ \breve{\mathbf{M}}'=\left(\begin{array}{cccccccc}
 \breve{\mathbf{w}}'_1& \breve{\mathbf{w}}'_2&\cdots& \breve{\mathbf{w}}'_r&|& \breve{\mathbf{W}}'\bar{\mathbf{u}}_1&\cdots& \breve{\mathbf{W}}'\bar{\mathbf{u}}_t
\end{array}
            \right).$ In the following, we will show that the first column will be chosen from $ \breve{\mathbf{W}}'$.  
            
Note $\mathcal{S}_0(\breve{\mathbf{W}})\geq 0$ and $\mathbf{U}\geq 0$, for $j\in \{1,2,\cdots t\}$, 
$$
\|\mathcal{S}_0(\breve{\mathbf{W}}\mathbf{u}_j)\|_1=\|\mathcal{S}_0(\sum^r_{i=1}\breve{\mathbf{w}}_iu_{ij})\|_1=\sum^r_{i=1}u_{ij}\|\mathcal{S}_0(\breve{\mathbf{w}}_i)\|_1.
$$           
Then we have 
$$
\sum^r_{i=1}\bar{u}_{ij}=\frac{\sum\limits^r_{i=1}u_{ij}\|\mathcal{S}_0(\breve{\mathbf{w}}_i)\|_1}{\sum\limits^r_{i=1}u_{ij}\|\mathcal{S}_0(\breve{\mathbf{w}}_i)\|_1}=1.
$$
Now 
\begin{eqnarray}\label{Them1:eq1}
\|\breve{\mathbf{W}}'\mathbf{\bar{u}}_{j}\|^2_2=\|\sum^r_{i=1} \breve{\mathbf{w}}'_i\bar{u}_{ij}\|^2_2\leq \sum^r_{i=1}\| \breve{\mathbf{w}}'_i\|^2_2 \bar{u}_{ij}^2 \leq \max_i\| \breve{\mathbf{w}}'_i\|^2_2 \sum^r_{i=1}\bar{u}_{ij}^2 \leq \max_i\| \breve{\mathbf{w}}'_i\|^2_2 (\sum^r_{i=1}\bar{u}_{ij})^2 =\max_i\|\breve{\mathbf{w}}'_i\|^2_2
\end{eqnarray}
Hence the first column will be selected from $ \breve{\mathbf{W}}'$.

Without loss  of generality, assume that $\breve{\mathbf{w}}_1$ is chosen, then in the projection procedure,
\begin{eqnarray*}
\mathbf{Z}&=&\breve{\mathbf{M}}'-\left(\begin{array}{ccccccc}
\frac{< \breve{\mathbf{w}}'_1, \breve{\mathbf{w}}'_1> \breve{\mathbf{w}}'_1 }{< \breve{\mathbf{w}}'_1, \breve{\mathbf{w}}'_1>}& \cdots & \frac{ < \breve{\mathbf{w}}'_1, \breve{\mathbf{w}}'_r> \breve{\mathbf{w}}'_1}{< \breve{\mathbf{w}}'_1, \breve{\mathbf{w}}'_1>} &|& 
\frac{< \breve{\mathbf{w}}'_1, \breve{\mathbf{W}}'\bar{\mathbf{u}}_1> \breve{\mathbf{w}}'_1}{< \breve{\mathbf{w}}'_1, \breve{\mathbf{w}}'_1>}&\cdots &\frac{< \breve{\mathbf{w}}'_1, \breve{\mathbf{W}}'\bar{\mathbf{u}}_t> \breve{\mathbf{w}}'_1}{< \breve{\mathbf{w}}'_1, \breve{\mathbf{w}}'_1>}
\end{array}
            \right)\\
&=& \left(\begin{array}{cccccccc}
\mathbf{0}&|&  \breve{\mathbf{w}}'_2-\frac{ < \breve{\mathbf{w}}'_1, \breve{\mathbf{w}}'_2> \breve{\mathbf{w}}'_1}{< \breve{\mathbf{w}}'_1, \breve{\mathbf{w}}'_1>}  &\cdots 
&|& 
 \breve{\mathbf{W}}'\bar{\mathbf{u}}_1-\frac{< \breve{\mathbf{w}}'_1, \breve{\mathbf{W}}'\bar{\mathbf{u}}_1> \breve{\mathbf{w}}'_1}{< \breve{\mathbf{w}}'_1, \breve{\mathbf{w}}'_1>}&\cdots & \breve{\mathbf{W}}'\bar{\mathbf{u}}_t-\frac{< \breve{\mathbf{w}}'_1, \breve{\mathbf{W}}'\bar{\mathbf{u}}_t> \breve{\mathbf{w}}'_1}{< \breve{\mathbf{w}}'_1, \breve{\mathbf{w}}'_1>}.
\end{array}
            \right)
\end{eqnarray*}
Let $\alpha_i=\frac{< \breve{\mathbf{w}}'_1, \breve{\mathbf{w}}'_i>}{< \breve{\mathbf{w}}'_1, \breve{\mathbf{w}}'_1>}$, $\beta_j=\frac{< \breve{\mathbf{w}}'_1, \breve{\mathbf{W}}'\bar{\mathbf{u}}_j>}{< \breve{\mathbf{w}}'_1, \breve{\mathbf{w}}'_1>}$, $ \breve{\mathbf{w}}'^{(2)}_i= \breve{\mathbf{w}}'_i-\alpha_i \breve{\mathbf{w}}'_1 $, $i\in \{1,2,\cdots r\}, j\in \{1,2,\cdots t\}$, and $ \breve{\mathbf{W}}'^{(2)}=[\breve{\mathbf{w}}'^{(2)}_1,\cdots \breve{\mathbf{w}}'^{(2)}_r]$.  Then
\begin{eqnarray*}
 \breve{\mathbf{W}}'\bar{\mathbf{u}}_j-\beta_j \breve{\mathbf{w}}'_1&=&\sum^r_{i=1} \breve{\mathbf{w}}'_i\bar{u}_{ij}-\beta_j \breve{\mathbf{w}}'_1\\
&=&\sum^r_{i=1}( \breve{\mathbf{w}}'^{(2)}_i+\alpha_i \breve{\mathbf{w}}'_1)
\bar{u}_{ij}-\beta_j \breve{\mathbf{w}}'_1\\
&=& \sum^r_{i=1}  \breve{\mathbf{w}}'^{(2)}_i \bar{u}_{ij}+(\sum^r_{i=1}\alpha_i\bar{u}_{ij}-\beta_j) \breve{\mathbf{w}}'_1 \\
&=&\sum^r_{i=1}  \breve{\mathbf{w}}'^{(2)}_i \bar{u}_{ij}= \breve{\mathbf{W}}'^{(2)}\bar{\mathbf{u}}_j.
\end{eqnarray*}
Hence, $\mathbf{Z}=\left(\begin{array}{cccccccccc}
\mathbf{0}&|&  \breve{\mathbf{w}}'^{(2)}_2&\cdots& \breve{\mathbf{w}}'^{(2)}_r&|& \breve{\mathbf{W}}'^{(2)}\bar{\mathbf{u}}_1&\cdots& \breve{\mathbf{W}}'^{(2)}\bar{\mathbf{u}}_t
\end{array}
            \right)$. Similar result as \eqref{Them1:eq1} can be derived. The second column will be selected from $\mathcal{K}$.

For simplicity, assume $L$ columns have been selected from $\mathcal{K}$, let $P^{\perp}_{\breve{\mathbf{W}}'^{(L)}}$ be the projection onto the orthogonal complement of the columns of $ \breve{\mathbf{W}}'^{(L)}$. Easy get $P^{\perp}_{ \breve{\mathbf{W}}'^{(L)}} \breve{\mathbf{W}}'^{(L)}=0$. We  have
$$
\Vert P^{\perp}_{ \breve{\mathbf{W}}'^{(L)}} (\breve{\mathbf{W}}'\mathbf{u}_j) \Vert_2=\Vert \sum^r_{i=1} P^{\perp}_{ \breve{\mathbf{W}}'^{(L)}} ( \breve{\mathbf{w}}'_i)\bar{u}_{ij} \Vert_2 \leq 
\sum^r_{i=1}\bar{u}_{ij}\Vert  P^{\perp}_{\breve{\mathbf{W}}'^{(L)}} ( \breve{\mathbf{w}}'_i) \Vert_2 \leq \max_i \Vert  P^{\perp}_{ \breve{\mathbf{W}}'^{(L)}} ( \breve{\mathbf{w}}'_i) \Vert_2.$$
It implies that the $L+1$-th column will be selected from $\mathcal{K}$.

Since $\rank(\mat(\breve{\mathbf{M}}))=r$, we can deduce that $\rank(\mat(\breve{\mathbf{W}}))=r$. After select the $r$-th column from $\mathcal{K}$, the residue matrix will degenerate into zero matrix.  The results hence follow.
\end{proof}

\subsection{Algorithms for Solving $\mathbf{H}$}

After column set $\mathcal{K}$ has been determined from Quaternion-SPA, the factor matrix $\breve{\mathbf{W}}= \breve{\mathbf{M}}(:,\mathcal{K})$. The factor matrix $\mathbf{H}\in \mathbb{R}^{r\times n}_+$ can be computed by solving the following quaternion nonnegative least square problem.

Given $\breve{\mathbf{M}}\in \mathbb{H}^{m\times n}_S$, $\breve{\mathbf{W}}\in \mathbb{H}^{m\times r}_S$, solve
\begin{equation}\label{qua_nls}
\min_{\mathbf{H}\in \mathbb{R}^{r\times n}_+}\|\breve{\mathbf{M}}-\breve{\mathbf{W}}\mathbf{H}\|^2_F\doteq \min_{\mathbf{h}_j\in \mathbb{R}^{r}_+}\sum^n_{j=1}\|\breve{\mathbf{m}}_j-\breve{\mathbf{W}}\mathbf{h}_j\|^2_F.
\end{equation}
Let $F(\mathbf{h}_j)=\|\breve{\mathbf{m}}_j-\breve{\mathbf{W}}\mathbf{h}_j\|^2_2$, $j= \{1,2,\cdots n\}$, then $F(\mathbf{h}_j)=\sum\limits^3_{l=0}\|\mathcal{S}_l(\breve{\mathbf{m}}_j)- \mathcal{S}_l(\breve{\mathbf{W}})\mathbf{h}_{j}\|^2_F$. $ \mathbf{h}_j$ in equation \eqref{qua_nls} can be solved by the following optimization model.
\begin{equation}\label{hj}
 \mathbf{h}_j=\mathop{\arg\min}_{\mathbf{h}_j\in \mathbb{R}^{r}_+}F(\mathbf{h}_j)=\sum^3_{l=0}\|\mathcal{S}_l(\breve{\mathbf{m}}_j)- \mathcal{S}_l(\breve{\mathbf{W}})\mathbf{h}_{j}\|^2_F, \quad j=1,2,\cdots, n.
\end{equation}
%The gradient of $F(\mathbf{h}_j)$ is 
%\begin{equation*}
%\frac{\partial F(\mathbf{h}_j)}{\partial \mathbf{h}_j}=2\sum^3_{t=0} [\mathcal{S}^T_l(\breve{\mathbf{W}})\mathcal{S}_t(\breve{\mathbf{M}}_j)- \mathcal{S}^T_l(\breve{\mathbf{W}})\mathcal{S}_t(\breve{\mathbf{W}})\mathbf{h}_j], \quad j\in [n].
%\end{equation*}
%Let $\frac{\partial F(\mathbf{h}_j)}{\partial \mathbf{h}_j}=0$, note that $\mathbf{h}_j\in \mathbb{R}^{r}_+$, we get 
%$$\mathbf{h}_j=\Pi_{\mathbb{R}^r_+}(\big(\sum\limits^3_{l=0}\mathcal{S}^T_l(\breve{\mathbf{W}})\mathcal{S}_l(\breve{\mathbf{W}})\big)^{-1}\sum\limits^3_{l=0}\mathcal{S}^T_l(\breve{\mathbf{W}})\mathcal{S}_l(\breve{\mathbf{M}}_j)),$$
%where $\Pi_{\mathbb{R}^r_+}$ is the projection onto ${\mathbb{R}^r_+}$. 

The following quaternion nonnegative least squares method is proposed in 
\cite{flamant2020quaternion} for model \eqref{hj}.
\begin{algorithm}[H]
\caption{Quaternion Nonnegative Least Squares (QNLS)\cite{flamant2020quaternion} \label{algo:qnls}}
\begin{algorithmic}[1]
\REQUIRE Matrix $\breve{\mathbf{M}} \in \mathbb{H}^{m\times n}_S$, and $\breve{\mathbf{W}} \in \mathbb{H}^{m\times r}_S$. 

\ENSURE  Matrix $\mathbf{H}\in \mathbb{R}^{r\times n}_+$   such that $\min\limits_{\mathbf{H}\in \mathbb{R}^{r\times n}_+}\|\breve{\mathbf{M}}-\breve{\mathbf{W}}\mathbf{H}\|^2_{F}$.

\STATE $\mathbf{A}=\big(\sum\limits^3_{l=0}\mathcal{S}^T_l(\breve{\mathbf{W}})\mathcal{S}_l(\breve{\mathbf{W}})\big)^{-1}$
\FOR{$j$ = 1 : n}
\STATE $\mathbf{h}_j= \mathbf{A}\cdot\sum\limits^3_{l=0}\mathcal{S}^T_l(\breve{\mathbf{W}})\mathcal{S}_l(\breve{\mathbf{m}}_j)$
\STATE \textbf{Projection onto $\mathbb{R}^r_+$:} $\mathbf{h}_j=\Pi_{\mathbb{R}^r_+}(\mathbf{h}_j)$
\ENDFOR
\end{algorithmic}
\end{algorithm}

\begin{remark}
We remark that quaternion  nonnegative least squares needs $(8r^2m+r^3)+(8rmn+2r^2n)+nr$ flops in total. 
\end{remark}
Note that QNLS method is simple but rough. The projection onto $\mathbb{R}^{r\times n}_+$ may lead to many zeros in $\mathbf{H}$. It may cause algorithms fail, for example the quaternion nonnegative matrix factorization (QNMF) proposed in \cite{flamant2020quaternion}. Hence, to address this disadvantage and improve the  solution, an iterative method is given in the following section to compute $\mathbf{H}$. 

\subsubsection{Quaternion Hierarchical Nonnegative Least Squares}

As we know that objective function \eqref{qua_nls} can be further rewritten as, 
\begin{eqnarray*}
F(h_{pj})=\sum\limits^n_{j=1}\sum\limits^3_{l=0}\|\mathcal{S}_l(\breve{\mathbf{m}}_j)- \sum^r_{i\neq p}\mathcal{S}_l(\breve{\mathbf{w}}_i)h_{ij}-\mathcal{S}_l(\breve{\mathbf{w}}_p)h_{pj}\|^2_2.\\
\end{eqnarray*}

%\begin{eqnarray*}
%F(\mathbf{h}_j)&=&\sum\limits^3_{l=0}\|\mathcal{S}_l(\breve{\mathbf{m}}_j)- \mathcal{S}_l(\breve{\mathbf{W}})\mathbf{h}_{j}\|^2_2\\
%&=& \sum\limits^3_{l=0}\|\mathcal{S}_l(\breve{\mathbf{m}}_j)- \sum^r_{i=1}\mathcal{S}_l(\breve{\mathbf{w}}_i)h_{ij}\|^2_2\\
%&=& \sum\limits^3_{l=0}\|\mathcal{S}_l(\breve{\mathbf{m}}_j)- \sum^r_{i\neq p}\mathcal{S}_l(\breve{\mathbf{w}}_i)h_{ij}-\mathcal{S}_l(\breve{\mathbf{w}}_p)h_{pj}\|^2_2.\\
%\end{eqnarray*}
The gradient of  $F(h_{pj})$ is 
\begin{equation*}
\frac{\partial F(h_{pj})}{\partial h_{pj}}=2\sum^3_{l=0}  \mathcal{S}^T_l(\breve{\mathbf{w}}_p)\big(\mathcal{S}_l(\breve{\mathbf{m}}_j)- \sum^r_{i\neq p}\mathcal{S}_l(\breve{\mathbf{w}}_i)h_{ij}-\mathcal{S}_l(\breve{\mathbf{w}}_p)h_{pj}\big).
\end{equation*}
Because of the nonnegative constrains on $\mathbf{H}$, $h_{pj}$ is obtained by 
\begin{equation}\label{hpj}
h_{pj}=\Pi_{\mathbb{R}_+}\Big(\frac{\sum\limits^3_{l=0}\mathcal{S}^T_l(\breve{\mathbf{w}}_p)\big(\mathcal{S}_l(\breve{\mathbf{m}}_j)- \sum\limits^r_{i\neq p}\mathcal{S}_l(\breve{\mathbf{w}}_i)h_{ij}\big)}{\sum\limits^3_{l=0}\mathcal{S}^T_l(\breve{\mathbf{w}}_p)\mathcal{S}_l(\breve{\mathbf{w}}_p)}\Big),\quad j=1,2,\cdots, n.
\end{equation}
Note that the row of $\mathbf{H}$ may equal to zero thanks to the projection $\Pi_{\mathbb{R}_+}$, which will lead to failure of algorithms. To address this problem, one strategy is to use a very small positive number $\xi$ to replace lower bound $0$ in the projection $\Pi_{\mathbb{R}_+}$ in practice. Precisely, we define projection $\Pi_{\mathbb{R}_{+\xi}}$ as follows,
$$
\Pi_{\mathbb{R}_{+\xi}}(x)=\left\{\begin{array}{cl}
x, & \mbox{if} \ x\geq 0,\\
\xi, & \mbox{if} \  x<0,\\
\end{array}\right.
$$
where $\xi\lll 1$ is a very small positive number.
The row of $\mathbf{H}$ can be updated successively by 
\begin{equation}\label{Hrow_0}
\mathbf{H}^{(k+1)}_{p,:}=\Pi_{\mathbb{R}^n_{+\xi}}\Big(\mathbf{H}'^{(k+1)}_{p,:}\Big), 
\end{equation}
where
$$
 \mathbf{H}'^{(k+1)}_{p,:}=\frac{\sum\limits^3_{l=0}\mathcal{S}^T_l(\breve{\mathbf{w}}_p) \mathcal{S}_l(\breve{\mathbf{M}})- \sum\limits^3_{l=0}\sum\limits^{p-1}_{i=1}\mathcal{S}^T_l(\breve{\mathbf{w}}_p)\mathcal{S}_l(\breve{\mathbf{w}}_i)\mathbf{H}^{(k+1)}_{i,:} - \sum\limits^3_{l=0}\sum\limits^r_{i= p+1}\mathcal{S}^T_l(\breve{\mathbf{w}}_p)\mathcal{S}_l(\breve{\mathbf{w}}_i)\mathbf{H}^{(k)}_{i,:}}{\sum\limits^3_{l=0}\mathcal{S}^T_l(\breve{\mathbf{w}}_p)\mathcal{S}_l(\breve{\mathbf{w}}_p)}.
$$
Here $\mathbf{H}_{p,:}$ refers to the $p$-th row of $\mathbf{H}$. 
%\begin{equation}\label{Hrow}
%\mathbf{H}^{(k+1)}_{p,:}=\max\Big(\xi, \mathbf{H}'^{(k+1)}_{p,:}\Big). 
%\end{equation}
The iteration is presented in Algorithm \ref{algo:qhnls}.
\begin{algorithm} 
\caption{Quaternion Hierarchical Nonnegative Least Squares (QHNLS) \label{algo:qhnls}}
\begin{algorithmic}[1]
\REQUIRE Matrix $\breve{\mathbf{M}} \in \mathbb{H}^{m\times n}_S$, and $\breve{\mathbf{W}} \in \mathbb{H}^{m\times r}_S$. Initial matrix $\mathbf{H}^{(0)}\in \mathbb{R}^{r\times n}_+$, maximum iteration $iter$ and stopping criterion $\epsilon$. 

\ENSURE  Matrix $\mathbf{H}$   such that $\min\limits_{\mathbf{H}\geq \xi}\|\breve{\mathbf{M}}-\breve{\mathbf{W}}\mathbf{H}\|^2_{F}$.

\STATE $\mathbf{A}=\sum\limits^3_{l=0}\mathcal{S}^T_l(\breve{\mathbf{W}})\mathcal{S}_l(\breve{\mathbf{W}}) $, $\mathbf{B}=\sum\limits^3_{l=0}\mathcal{S}^T_l(\breve{\mathbf{W}})\mathcal{S}_l(\breve{\mathbf{M}}) $.

\WHILE{$k< iter$ or $\delta<\epsilon_0$}
\FOR{$p$ = 1 : r}
\STATE  $\mathbf{c}=\sum\limits^{p-1}_{i=1}a_{pj}\mathbf{H}^{(k+1)}_{i,:}+\sum\limits^{r}_{i=p+1}a_{pj}\mathbf{H}^{(k)}_{i,:}$
\STATE $\mathbf{H}^{(k+1)}_{p,:}= \frac{\mathbf{B}_{p,:}-\mathbf{c}}{a_{pp}}$
\STATE \textbf{Projection onto $\mathbb{R}^n_+$:} $\mathbf{H}^{(k+1)}_{p,:}=\max(\xi, \mathbf{H}^{(k+1)}_{p,:})$
\ENDFOR
\STATE $\delta=\frac{\|\mathbf{H}^{(k+1)}-\mathbf{H}^{(k)}\|_F}{\|\mathbf{H}^{(1)}-\mathbf{H}^{(0)}\|_F}$
\ENDWHILE
\end{algorithmic}
\end{algorithm}

\begin{remark}
QHNLS needs $(8r^2m+rmn)+(2nr^2+nr)k$ flops in total, here $k$ refers to as the number of iterations.  The initial input for Algorithm \ref{algo:qhnls} can be generated by  Algorithm \ref{algo:qnls} or simply generated randomly. 
\end{remark}  
We note that QHNLS is a block-coordinate descent method. $\mathbf{H}_{p,:}$ is in closed convex set  $\mathbb{R}^{n}_{\xi}$, and the subproblems
\begin{equation}\label{Hop}
\mathbf{H}_{p,:}=\mathop{\arg\min}_{\mathbf{H}_{p,:}\geq \xi}\|\breve{\mathbf{M}}-\sum_{i\neq p}\breve{\mathbf{W}}_{:,i}\mathbf{H}_{i,:}-\breve{\mathbf{W}}_{:,p}\mathbf{H}_{p,:}\|^2_F , \quad p=1,\cdots, m,
\end{equation}
are strictly convex. Here $\mathbf{H}_{p,:}$ and $\mathbf{H}_{:,p}$ refer to the $p$-th row and $p$-th column of $\mathbf{H}$ respectively. 
We can derive that the limit points of the iterates \eqref{Hrow_0} are stationary points, based on the results in\cite{powell1973search,bertsekas1997nonlinear}. The convergence results are presented as follows.
\begin{theorem}
The points of algorithm \ref{algo:qhnls} will converge to  stationary points for every $\xi > 0$.
\end{theorem}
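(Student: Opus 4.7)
The plan is to recognize Algorithm~\ref{algo:qhnls} as a classical cyclic (Gauss--Seidel) block coordinate descent scheme applied to
$$\min_{\mathbf{H}\geq\xi}\;F(\mathbf{H})\doteq \|\breve{\mathbf{M}}-\breve{\mathbf{W}}\mathbf{H}\|_F^{2}=\sum_{l=0}^{3}\|\mathcal{S}_l(\breve{\mathbf{M}})-\mathcal{S}_l(\breve{\mathbf{W}})\mathbf{H}\|_F^{2},$$
with blocks given by the rows $\mathbf{H}_{p,:}$ and feasible set equal to the Cartesian product $\prod_{p=1}^{r}\{\mathbf{h}\in\mathbb{R}^{n}:\mathbf{h}\geq\xi\}$ of closed convex sets. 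Once this reduction is explicit, I would invoke the standard convergence theorem for such schemes (for instance Proposition~2.7.1 in \cite{bertsekas1997nonlinear}, together with the search-direction analysis of \cite{powell1973search}), which asserts that every limit point of the iterates is a stationary (KKT) point of the constrained problem provided that (i) $F$ is continuously differentiable, (ii) the feasible set is a Cartesian product of closed convex sets, and (iii) each per-block subproblem admits a unique minimizer.

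Conditions (i) and (ii) are essentially free: $F$ is a quadratic polynomial in $\mathbf{H}$, and $\mathbf{H}\geq\xi$ is a coordinatewise constraint that decouples row by row. For (iii), I would compute the Hessian of $F$ in the $p$-th block, obtaining
$$\nabla^{2}_{\mathbf{H}_{p,:}}F=2\Big(\sum_{l=0}^{3}\mathcal{S}_l^{T}(\breve{\mathbf{w}}_p)\mathcal{S}_l(\breve{\mathbf{w}}_p)\Big)\mathbf{I}_n=2\|\breve{\mathbf{w}}_p\|_2^{2}\,\mathbf{I}_n,$$
which is positive definite whenever $\breve{\mathbf{w}}_p\neq 0$. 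Since $\breve{\mathbf{W}}=\breve{\mathbf{M}}(:,\mathcal{K})$ is produced by QSPA and QSPA selects columns by maximizing the $\ell_{2}$-norm (line~4 of Algorithm~\ref{algo:alfgm}), every $\breve{\mathbf{w}}_p$ is automatically nonzero, so the strict convexity is uniform in $p$. Consequently the closed-form update in lines~5--6 of Algorithm~\ref{algo:qhnls} is the unique constrained minimizer of a strictly convex quadratic over the half-space $\{\mathbf{h}\geq\xi\}$, which is exactly what (iii) requires; the sequence of objective values is therefore monotonically nonincreasing.

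The point requiring the most care is not the stationarity conclusion itself but ensuring that limit points exist, since $\{\mathbf{H}\geq\xi\}$ is unbounded. I would dispose of this by observing that $F$ is coercive on the feasible set when $\mat(\breve{\mathbf{W}})$ has full column rank---the same nondegeneracy hypothesis used in the QSPA correctness theorem---because then $\mathbf{H}\mapsto\bigl(\mathcal{S}_l(\breve{\mathbf{W}})\mathbf{H}\bigr)_{l=0}^{3}$ is injective and $F(\mathbf{H})\to\infty$ as $\|\mathbf{H}\|_F\to\infty$. Together with the monotone decrease of $F(\mathbf{H}^{(k)})$, this keeps the iterates inside the compact sublevel set $\{F\leq F(\mathbf{H}^{(0)})\}$, so limit points exist and the Bertsekas/Powell theorem yields the claim for every fixed $\xi>0$.
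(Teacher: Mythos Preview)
Your proposal is correct and follows essentially the same route as the paper: both identify Algorithm~\ref{algo:qhnls} as cyclic block coordinate descent over the rows $\mathbf{H}_{p,:}$, observe that the feasible set is a Cartesian product of closed convex sets and that each block subproblem is strictly convex, and then invoke the Powell/Bertsekas convergence theory (the paper also points to the analogous HNLS result in \cite{gillis2008nonnegative}). Your write-up is in fact more complete than the paper's sketch, since you explicitly compute the block Hessian $2\|\breve{\mathbf{w}}_p\|_2^2\,\mathbf{I}_n$ and address the existence of limit points via coercivity under the full-column-rank assumption on $\mat(\breve{\mathbf{W}})$, neither of which the paper spells out.
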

We remark that QHNLS is inspired by hierarchical nonnegative least squares (HNLS) \cite{cichocki2007hierarchical,gillis2012accelerated} that is an exact block-coordinate descent method for solving  nonnegative matrix factorization. The convergence results of Algorithm \ref{algo:qhnls} are similar to HNLS (see Theorem 3 in reference \cite{gillis2008nonnegative}). In the proposed SQMF model,  $\breve{\mathbf{W}}$ is quaternion matrix formed by the columns of quaternion matrix $\breve{\mathbf{M}}$, we only apply QHNLS to compute $\mathbf{H}$.

\section{Numerical Experiments}\label{sec:hyper}

In this section, we test the algorithms on realistic polarization image for application of image representation, and  simulated spectro-polarimetric data set for blind unmixing.  All experiments were run on Intel(R) Core(TM) i5-5200 CPU @2.20GHZ with 8GB of RAM using Matlab. We compare our method to the following state-of-the-art methods.
\begin{enumerate}

\item SPA (Successive projection algorithm\cite{araujo2001successive,gillis2014fast,fu2015self}) is a state-of-the-art separable NMF method. It selects the column with the largest $l_2$ norm then projects all columns of $\mathbf{M}\in \mathbb{R}^{m\times n}$ on the orthogonal complement of the extracted column at each step. Note that SPA can only work on real matrix $\mathbf{M}\in \mathbb{R}^{m\times n}$,  we consider the following  variant for an input quaternion matrix $\breve{\mathbf{M}}\in \mathbb{H}^{m\times n}_S$.
\begin{itemize}
\item  SPA$^*$: We determine the column set  by applying SPA  on the real part of quaternion matrix $\breve{\mathbf{M}}\in \mathbb{H}^{m\times n}_S$, that is $\mathcal{S}_0(\breve{\mathbf{M}})$. 
\end{itemize}

\item QNMF (quaternion NMF, \cite{flamant2020quaternion}) is a state-of-the-art quaternion NMF model  generalized from the standard NMF.  Given a quaternion matrix $\breve{\mathbf{M}}\in \mathbb{H}^{m\times n}_S$, QNMF aims to find $\breve{\mathbf{W}}\in \mathbb{H}^{m\times r}_S$ and $\mathbf{H}\in \mathbb{R}^{r\times n}_+$ such that $\|\breve{\mathbf{M}}-\breve{\mathbf{W}}\mathbf{H}\|$ is minimize. Quaternion alternating least squares algorithm (QALS) extended from standard ALS is proposed to solve QNMF model.

\item ImQNMF: We apply quaternion hierarchical ALS  to replace quaternion ALS to compute $\mathbf{H}$ for the quaternion NMF model, and refer it to as the improved quaternion NMF (ImQNMF).
 
\end{enumerate}

The stopping criterion of QNMF and ImQNMF:
We will use  $\delta = 10^{-4}$, and the maximum number of iterations is 1000. 

\subsection{Polarization Image Representation}
The polarization image "cover" is from polarization image dataset containing 40 images with several scenes, constructed by Qiu and etc. in \cite{qiu2021linear}. The "cover" picture has $1024\times 1024$ pixels, and each pixel contains information  of intensity and polarization. That is,
$$
\breve{\mathbf{T}}=\mathcal{S}_0(\breve{\mathbf{T}})+\mathtt{i}\mathcal{S}_1(\breve{\mathbf{T}})+\mathtt{j}\mathcal{S}_2(\breve{\mathbf{T}})+\mathtt{k}\mathcal{S}_3(\breve{\mathbf{T}})\in \mathbb{H}^{1024\times 1024}_S,
$$
where $\mathcal{S}_0(\breve{\mathbf{T}})\in \mathbb{R}^{1024\times 1024}_+$ represents the total intensity, and $(\mathcal{S}_1(\breve{\mathbf{T}}),\mathcal{S}_2(\breve{\mathbf{T}}),\mathcal{S}_3(\breve{\mathbf{T}}))$ stands for its polarization.  
We show the image "cover" captured by a colour camera and polarization image sensor in Fig.\ref{polarimage}. The top row of Fig.\ref{polarimage} only shows the intensity of "cover" captured by the colour camera with polarization filter at $0^{\circ}$. 
The total intensity $\mathcal{S}_0(\breve{\mathbf{T}})$ and polarization $(\mathcal{S}_1(\breve{\mathbf{T}}),\mathcal{S}_2(\breve{\mathbf{T}}))$ measured by four intensities with linear polarizers oriented at $0^{\circ}, 45^{\circ}, 90^{\circ},135^{\circ}$, are shown at the the bottom row of Fig.\ref{polarimage}. Since $\mathcal{S}_3(\breve{\mathbf{T}})=0$ here, we do not show it in Fig.\ref{polarimage}. We have the following observations: 
\begin{itemize}
\item The total intensities of background and the cover are almost the same, except that of the edge of the cover. 

\item The differences between polarization $\big(\mathcal{S}_1(\breve{\mathbf{T}}), \mathcal{S}_2(\breve{\mathbf{T}})\big)$ of background and cover are significant. Especially, the center of cover has higher polarization values than the other parts.
\end{itemize}

\begin{figure} 
\includegraphics[height=5cm,width=\textwidth]{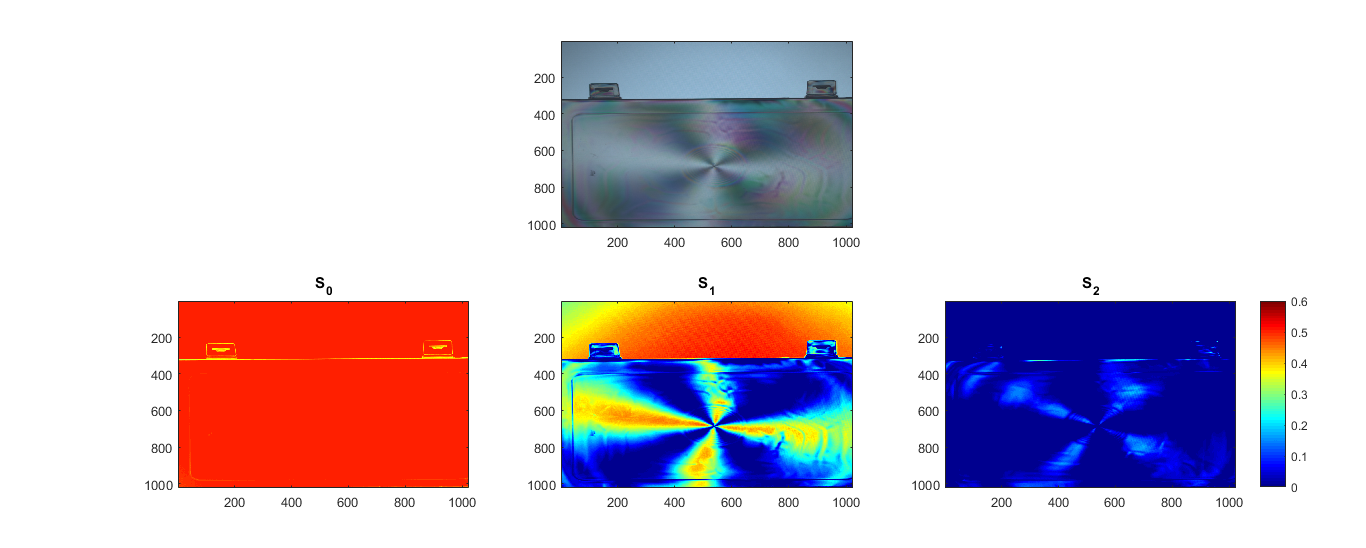}
\caption{Top: image "cover"; Bottom: total intensity $\mathcal{S}_0(\breve{\mathbf{T}})$, polarization $\mathcal{S}_1(\breve{\mathbf{T}})$ and $\mathcal{S}_2(\breve{\mathbf{T}})$ }\label{polarimage}
\end{figure}

From the observation, the edge and center of the picture "cover" show the main characteristic. As we know that QSPA and SPA$^*$ are able to select data points that can be regarded as feature points, it will be interesting to see their identification ability and the reconstruction effect for the "cover" image.  Hence, we first divide image "cover" $\breve{\mathbf{T}}$ into 256 small image blocks, and each block has $64\times 64$ pixels. The data matrix $\breve{\mathbf{M}}\in\mathbb{H}^{4096\times 256}_S $ is then generated by these 256 blocks; that is, each column of $\breve{\mathbf{M}}$ is generated by vectorizing every image block. We test QSPA and SPA$^*$ on $\breve{\mathbf{M}}$, and determine $r$ columns, which correspond to the $r$ key image blocks. 
\begin{figure} 
\includegraphics[height=6cm,width=\textwidth]{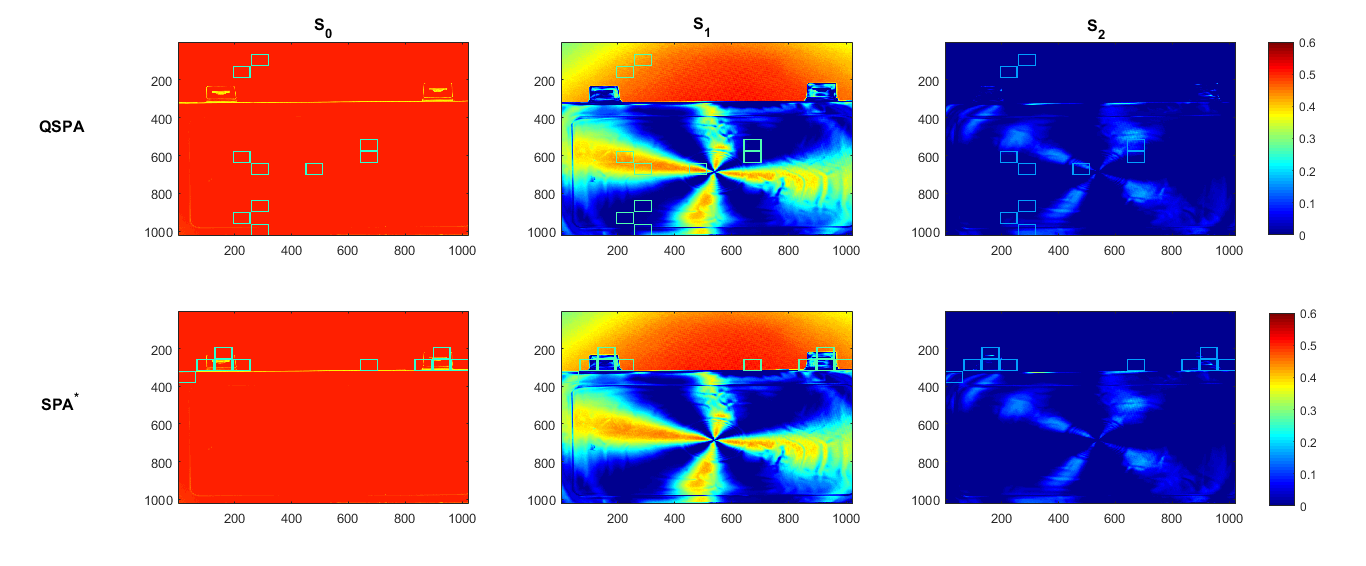}
\caption{10 key blocks identified by QSPA and SPA$^*$.}\label{block-selected}
\end{figure}

In Fig. \ref{block-selected}, we show $r=10$ images blocks determined by QSPA and SPA$^*$. 
It is interesting to observe that the areas determined by QSPA are in the background and the center of the cover, while SPA$^*$ selects the edge of the cover. The results are surprising coincident with the observation of the characteristics of total intensity $\mathcal{S}_0$ and polarization 
$(\mathcal{S}_1,\mathcal{S}_2)$. We remark that QNMF and ImQNMF are not able to identify critical regions  that contain latent features from a polarization image.  

In Table \ref{table:Po_result}, we report the approximations of the reconstructed image of all the methods, which are defined as follows.
\begin{enumerate}
\item The total relative approximation in percentage:
\begin{equation}\label{appro}
Appro=100-100*\dfrac{\|\breve{\mathbf{M}}-\breve{\mathbf{W}}\mathbf{H}\|_F}{\|\breve{\mathbf{M}}\|_F}.
\end{equation}
  
\item 
The relative approximation on intensity ($S_0$) and polarization information ($S_1,S_2,S_3$) in percentage are defined as
 \begin{equation} \label{appro1}
\text{app-}s_l= 100- 100*\frac{ \|\mathcal{S}_l(\breve{\mathbf{M}})-\mathcal{S}_l(\breve{\mathbf{W}})\mathbf{H}\|_F}{\|\mathcal{S}_l(\breve{\mathbf{M}})\|_F}, \quad l=0,1,2,3.
\end{equation}
\end{enumerate}
We remark that for QNMF, ten initials are generated randomly, and the results with the highest "Appro" value are reported. From Table \ref{table:Po_result}, we observe that: 
\begin{itemize}
\item In terms of the relative approximation of all components, as $r$, the number of features increases, so make the approximations of all the methods. It is not surprising that ImQNMF gets the highest values, since ImQNMF does not consider extra constraints. QSPA receives the second best results, while SPA$^*$ has the worst ones. We note that QNMF fails when $r=30,50$ for all initial guesses.

\item In terms of running time, SPA$^*$ is the fastest method, which needs less than 1 second. The proposed QSPA is the second fast, and ImQNMF is the slowest method. Especially when $r=50$, ImQNMF is almost $10^5$ times slower than QSPA, and $10^6$ times slower than SPA$^*$.
\end{itemize}
It is worth mentioning that,  QSPA  and SPA  only need to store the indexes of important columns and the weight matrix $\mathbf{H}$, to represent the polarization image. Precisely, for a given $\breve{\mathbf{M}}\in \mathbb{H}^{m\times n}_S$ and number of features $r$, to represent the factor matrices $\breve{\mathbf{W}}\in \mathbb{H}^{m\times r}_S$ and $\mathbf{H}\in \mathbb{R}^{r\times n}$, QSPA  and SPA only store $r+rn$ parameters,  while QNMF and ImQNMF need $4rm+rn$.

Visualization of reconstructed images by $r=10$ features is shown in Fig.  \ref{polarresult}. It is interesting to find the results of QSPA capture the main characteristics of the polarization image "cover". The results of QNMF and ImQNMF are more smooth, compared to that of SPA$^*$ and QSPA. We also notice that SPA$^*$ is not able to recover  $\mathcal{S}_2(\breve{\mathbf{T}})$. 

In Appendix, we show more visualization results on $r=30,50$.

\begin{figure} 
\includegraphics[height=8.5cm,width=\textwidth]{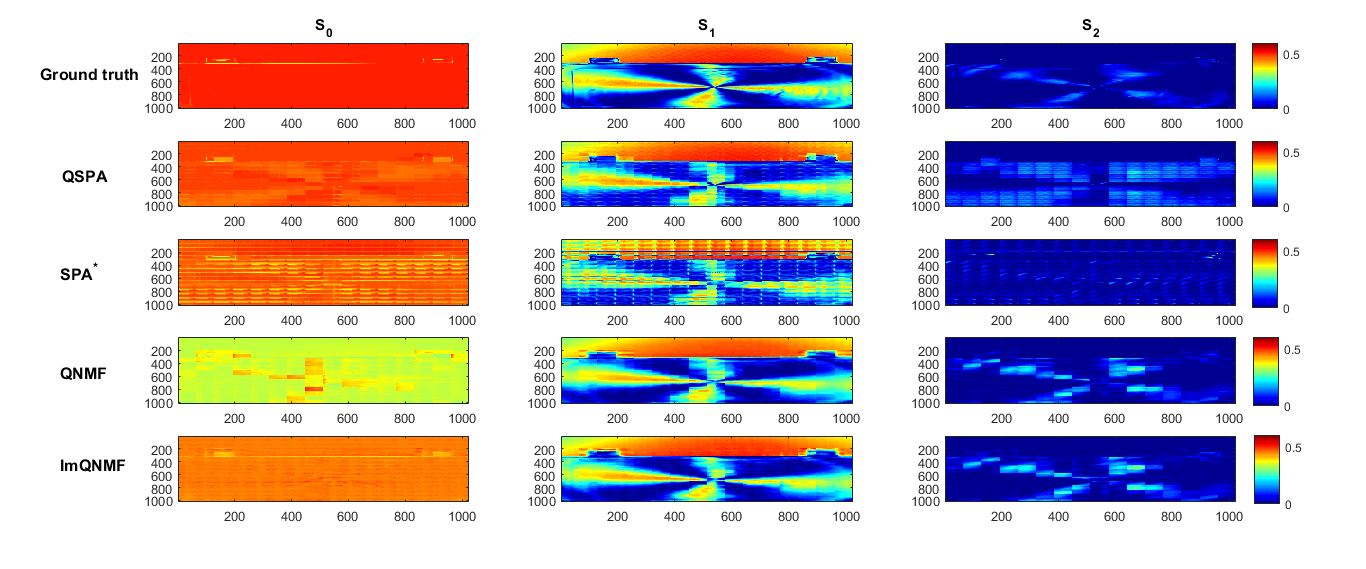}
\caption{Reconstruction results $(\mathcal{S}_0(\breve{\mathbf{T}}),\mathcal{S}_1(\breve{\mathbf{T}}),\mathcal{S}_2(\breve{\mathbf{T}}))$ of all methods $(r=10)$.}\label{polarresult}
\end{figure}

\begin{table} 
  \centering 
  \begin{tabular}{|c||c||c|c|c|c||c|}
  \hline
 ~& r&Appro & app-$s_0$ & app-$s_1$ & app-$s_2$ & Time(s)\\  
 \hline
  \multirow{3}*{QSPA} &10&88.65& 96.89&84.69& 27.97&0.55\\
  \cline{2-7}&30&92.95&97.76&89.37&32.46&1.79\\
   \cline{2-7}&50&95.64&99.08&93.52&54.78&3.26\\
 \hline
 \hline
 \multirow{3}*{SPA$^*$}&10&84.03&93.84&72.55&4.43&0.01\\
   \cline{2-7}&30&90.89& 97.25&85.18&24.84&0.04\\
   \cline{2-7}&50&91.87& 97.77&86.35&37.38&0.10\\
 \hline
 \hline
 \multirow{3}*{QNMF}&10&91.28&92.98&88.75&56.41&2036\\
  \cline{2-7}&30&-&-&-&-&-\\
   \cline{2-7}&50&-&-&-&-&-\\
 \hline
 \hline
 \multirow{3}*{ImQNMF}&10&94.20&97.90& 90.46&56.37&2048\\
   \cline{2-7}&30&96.95&98.79&95.29&73.79&87155\\
    \cline{2-7}&50&97.91& 99.00&96.84&83.12&296840\\
 \hline
 \end{tabular}
 \caption{Numerical results (in percent) for 
 cover spectral-polarimetric image.}\label{table:Po_result}
\end{table}

We conclude the advantages of QSPA in the application of image representation as follows:  QSPA can capture important characteristics from a polarization image. It can reconstruct original image well when the number of features is big enough. At the mean time, it runs very fast. 

\subsection{Spectro-Polarimetric Image Unmixing}

We simulate spectro-polarimetric data sets from the Urban HSI \cite{guo2009l1}. 
Assume $(\mathbf{W}^0,\mathbf{H}^0)$ be the unmixing ground truth of HSI,   $\mathbf{W}^0\in \mathbb{R}^{m\times r_0}_+$ stands for the endmember matrix and $\mathbf{H}^0 \in \mathbb{R}^{r_0\times n}_+$ represents the abundance matrix of all endmembers. 
The spectro-polarimetric data sets are simulated by
\begin{equation}\label{simulateRule}
\breve{\mathbf{M}}=\breve{\mathbf{M}}^*+\mathbf{N}
\end{equation}
where $\breve{\mathbf{M}}^*=\breve{\mathbf{W}}^*\mathbf{H}^*\in \mathbb{H}^{m\times n}_S$, and $\mathbf{N}\in \mathbb{H}^{m\times n}$ is noise matrix. 
\begin{itemize}
\item The entries of polarimetric matrix $\breve{\mathbf{W}}^*\in \mathbb{H}^{m\times r}_S$ is generated by
$$
  \breve{\mathbf{W}}^*=\mathcal{S}_0(\breve{\mathbf{W}}^*)+\mathtt{i}\mathcal{S}_1(\breve{\mathbf{W}}^*)+\mathtt{j}\mathcal{S}_2(\breve{\mathbf{W}}^*)+\mathtt{k}\mathcal{S}_3(\breve{\mathbf{W}}^*),
  $$
where $\mathcal{S}_0(\breve{\mathbf{W}}^*)\in \mathbb{R}^{m\times r}$ is simulated from endmember matrix $\mathbf{W}^0$, and 
\begin{eqnarray*}
\mathcal{S}_1(\breve{\mathbf{W}}^*)&=&\phi\mathcal{S}_0(\breve{\mathbf{W}}^*) \cos{\mathbf{D}}_{\bm{\alpha}}\cos{\mathbf{D}}_{\bm{\beta}}, \\
 \mathcal{S}_2(\breve{\mathbf{W}}^*)&=&\phi  \mathcal{S}_0(\breve{\mathbf{W}}^*) \sin{\mathbf{D}}_{\bm{\alpha}}\cos{\mathbf{D}}_{\bm{\beta}}, \\
 \mathcal{S}_3(\breve{\mathbf{W}}^*)&=&\phi  \mathcal{S}_0(\breve{\mathbf{W}}^*)\sin{\mathbf{D}}_{\bm{\beta}}.
\end{eqnarray*}
$\phi$ is the degree of polarization. Here we set $\phi=1$, i.e., fully polarized.  
$\mathbf{D}_{\bm{\alpha}}=Diag(\bm{\alpha})$, $\mathbf{D}_{\bm{\beta}}=Diag(\bm{\beta})$ are diagonal matrices with the elements of vectors $\bm{\alpha}$ and $\bm{\beta}$ on the main diagonal, respectively.  Here
$\bm{\alpha}\in \mathbb{R}^{r\times 1}$  and  $\bm{\beta} \in \mathbb{R}^{r\times 1}$ are generated uniformly at random in the interval [0,1]: $\bm{\alpha}=(2*rand(1,r)-1)*\pi$ and $\bm{\beta}=(2*rand(1,r)-1)*\pi$. 

\item $\mathbf{H}^*\in \mathbb{R}^{r\times n}_+$ is generated from the abundance matrix $\mathbf{H}^0$ of HSI data sets.

\item The noise matrix $\mathbf{N}\in \mathbb{H}^{m\times n}$ is generated at random normal distribution, that is $\{\mathcal{S}_i(\mathbf{N})\}^3_{i=0}$ are generated by the \textit{randn} function of MATLAB. $\mathbf{N}$ is normalized such that $\|\mathbf{N}\|_F=\epsilon\|\breve{\mathbf{M}}^*\|_F$. $\epsilon$ is the noise level defined as $\epsilon=\frac{\|\mathbf{N}\|_F}{\|\breve{\mathbf{M}}^*\|_F}$.  
\end{itemize}

To evaluate the quality of solution $(\breve{\mathbf{W}},\mathbf{H})$ computed by a method, the total relative approximation "Appro" of \eqref{appro} and relative approximation on the components "app-$s_l$'' $(l=0,1,2,3)$ in  \eqref{appro1}  will be reported. We also present the relative approximation to ground truth $(\breve{\mathbf{W}}^*,\mathbf{H}^*)$ in percentage defined as follows.
$$
appW=100-100*\frac{\min_{\pi_w}\|\breve{\mathbf{W}}^*-\breve{\mathbf{W}}(:,\pi_w)\|_F}{\|\breve{\mathbf{W}}^*\|_F},\quad appH=100-100*\frac{\min_{\pi_h}\|\mathbf{H}^*-\mathbf{H}(:,\pi_h)\|_F}{\|\mathbf{H}^*\|_F},
$$
where $\pi_w$, and $\pi_h$ are permutations.

We test all the methods on three noise levels $\epsilon=[0,0.05,0.1]$ on the simulated data sets.  For each noise level, ten such matrices are generated. For each matrix, ten initials are generated randomly for QNMF method, and we report the results of that has the best "Appro" value.

\subsubsection{The Simulated Spectro-Polarimetric Urban Dataset}

The Urban HSI \cite{guo2009l1} is taken from Hyper-spectral Digital Imagery Collection Experiment (HYDICE) air-borne sensors and contains 162 clean spectral bands where each image has dimension $307\times 307$. Therefore the "spectral $\times$ pixels" data matrix $\mathbf{M}^0$  has dimensions 162 by 94249. The Urban data is mainly composed of 6 types of materials:
Road; Grass;  Tree;  Roof;  Metal; Dirt  (for more details, see \cite{guo2009l1}). 
Its ground truth $(\mathbf{W}^0,\mathbf{H}^0)$ are  
\begin{eqnarray*}
\mathbf{W}^0&=&[\mathbf{w}^0_1~|~ \mathbf{w}^0_2~|~ \mathbf{w}^0_3~|~\mathbf{w}^0_4~|~ \mathbf{w}^0_5~| \mathbf{w}^0_6]\in \mathbb{R}^{162\times 6}_+,  \\ \mathbf{H}^0&=&[(\mathbf{h}^0_1)^T~|~(\mathbf{h}^0_2)^T~|~(\mathbf{h}^0_3)^T~|~(\mathbf{h}^0_4)^T~|~(\mathbf{h}^0_5)^T~|~(\mathbf{h}^0_6)^T]^T\in \mathbb{R}^{6\times 94249}_+, 
\end{eqnarray*}
where $\mathbf{w}^0_j\in \mathbb{R}^{162\times 1}_+$, $\mathbf{h}^0_j\in \mathbb{R}^{94249\times 1}_+$, $j=1,2,\cdots,6.$

\begin{enumerate}[label=(\roman*)]

\item \textbf{Test 2.1: 6 sources spectro-polarimetric Urban dataset.  }

We simulate six sources spectro-polarimetric dataset by \eqref{simulateRule} based on Urban HSI with 
$$
\mathcal{S}_0(\breve{\mathbf{W}}^*)=\mathbf{W}^0;\quad \mathbf{H}^*=\mathbf{H}^0.
$$ 

\begin{table} 
  \centering 
  \begin{tabular}{|c|c||c|c|c|c|c||c|c||c|}
  \hline
 & Noise level& Appro & app-$s_0$ & app-$s_1$ & app-$s_2$ & app-$s_3$ & appW& appH& Time(s)\\  
  \hline
  \multirow{4}*{QSPA}&0\% &100& 100  & 100& 100& 100& 100&100&148.41\\
   \cline{2-10}&5\%& 93.59 & 95.72 & 89.50&87.39&91.25& 94.82&96.26&150.83\\
   \cline{2-10}&10\%&86.95&91.15 &78.46 &74.72&82.36&85.86&86.36&148.05\\
%    \cline{2-10}&30\%&60.65&72.36 &32.60 &19.53&46.00&50.73&42.34&147.63\\
 \hline
 \hline
   \multirow{4}*{SPA$^*$}&0\% & 100 &100& 100& 100& 100&100&100&0.11\\
   \cline{2-10}&5\%&68.28 &79.10&50.52&36.30&62.73&48.94&38.22&0.11\\
   \cline{2-10}&10\%&66.10 &77.52&46.89&31.43&58.93&51.09&38.77&0.11\\
 %   \cline{2-10}&30\%&41.11 &57.18&3.79&-9.14&25.09&26.96&19.13&0.11\\
 \hline
 \hline
    \multirow{4}*{QNMF}&0\% & 95.02 &94.86&93.91&94.09&95.67&79.08 &74.33 &2194\\
   \cline{2-10}&5\%& 92.52 &93.53&89.32 &87.58& 91.58&78.03&73.67&2157\\
   \cline{2-10}&10\%&  88.50 & 91.17 & 82.14& 78.91&85.39&78.56&74.30&2106\\
%    \cline{2-10}&30\%& 70.70 & 78.99& 50.46& 40.73& 58.81& 77.34& 70.92&2056\\
 \hline
 \hline
  \multirow{4}*{ImQNMF}&0\% &99.38&99.53 &99.17&99.08&99.16&91.55&86.61&1357 \\
   \cline{2-10}&5\%&  94.99& 96.65& 91.80& 90.16& 93.16& 93.95& 88.84&2890\\
   \cline{2-10}&10\%& 89.92&93.27& 83.28&80.14&86.27& 91.36& 87.31&2669\\
%    \cline{2-10}&30\%&  71.36& 80.12& 51.16&41.16& 59.18& 90.51& 80.92&2631\\
 \hline
 \end{tabular}
 \caption{Numerical results (in percent) for 6 sources spectro-polarimetric Urban dataset ($r=6$).}\label{table:Urbansource06}
\end{table}

The average quality measures in percent is reported in Table \ref{table:Urbansource06}. We have the following observations:
\begin{itemize}
\item[$-$] Regarding the relative approximation on all components, both QSPA and SPA$^*$ can achieve $100\%$ in the noiseless case.  As the noise level increases, the relative approximation of all methods decreases. On all noise levels, QSPA gets better results than SPA$^*$. ImQNMF overperforms QNMF; in a certain sense it validates the effectiveness of QHNLS.  It is reasonable that ImQNMF performs the best as the noise level $\epsilon \geq 5\%$, since it aims to find factor matrices without separability constraints.

\item[$-$]  In terms of recovering the ground truth $(\breve{\mathbf{W}}^*,\mathbf{H}^*)$, QSPA performs the best at the noise level $\epsilon\leq 5\%$.  ImQNMF is the best at the noise level $\epsilon= 10\%$. The reason is that the matrix at the noise level $\epsilon= 10\%$ is already far from a separable quaternion matrix. Worth  noting that SPA$^*$ can recover ground truth in noiseless case; the reason is the real component  $\mathcal{S}_0(\breve{\mathbf{M}}^*)$ is 6-separable matrix, i.e., the spectrum already contains 6 endmembers.

\item[$-$] We also report average running time for all the methods. SPA$^{*}$ only needs an average of $0.11$ seconds, and QSPA spends around $148$ seconds. Both QNMF and ImQNMF are very slow, take more than $2000$ seconds in the most cases.  
\end{itemize}  

In Appendix, we show some visual results at noise level $\epsilon= 5\%$ for one arbitrary group of simulated polarimetric parameters $\{\bm{\alpha},\bm{\beta}\}$ . 

\item \textbf{Test 2.2: 10 sources spectro-polarimetric Urban dataset.}

As we know that the "different objects with same intensity" problem is challenging in HSI blind unmixing. In the following simulated data set, we will show that the problem can be well solved with the help of polarization.  We simulate a polarized Urban dataset containing ten objects, i.e., $r=10$, with some different objects having the same intensity.

\begin{itemize}
\item  The intensity of source matrix $\mathcal{S}_0(\breve{\mathbf{W}}^*):$  
$$
\mathcal{S}_0(\breve{\mathbf{W}}^*)=[\mathbf{W}^0|~ \mathbf{w}^0_1~|~ \mathbf{w}^0_1~|~ \mathbf{w}^0_3~|~ \mathbf{w} ^0_4]\in \mathbb{R}^{162\times 10}_+,
$$ 
\end{itemize}
 
\begin{itemize}
\item The activation matrix $\mathbf{H}^*\in \mathbb{R}^{10\times 94249}_+: $ 
$$
[(\mathbf{h}^*_1)^T~|~(\mathbf{h}^*_2)^T~|~(\mathbf{h}^*_3)^T|~(\mathbf{h}^*_4)^T~|~(\mathbf{h}^*_5)^T~|~(\mathbf{h}^*_6)^T~|~(\mathbf{h}^*_7)^T~|~(\mathbf{h}^*_8)^T~|~(\mathbf{h}^*_9)^T~|~(\mathbf{h}^*_{10})^T]
$$
with$$
\mathbf{h}^*_1+\mathbf{h}^*_7+\mathbf{h}^*_8=\mathbf{h}^0_1; \quad
 \mathbf{h}^*_2=\mathbf{h}^0_2; \quad
\mathbf{h}^*_3+\mathbf{h}^*_9=\mathbf{h}^0_3;\quad
\mathbf{h}^*_4+\mathbf{h}^*_{10}=\mathbf{h}^0_4;\quad
\mathbf{h}^*_5=\mathbf{h}^0_5; \quad
\mathbf{h}^*_6=\mathbf{h}^0_6;
$$
where $\mathbf{h}^*_7$ takes the last 500 positions in $\mathbf{h}^0_1$ where the values are 1, and the last 1000 positions in $\mathbf{h}^0_1$ whose values are between 0 to 1. 

$\mathbf{h}^*_8$ takes the first 500 positions in $\mathbf{h}^0_1$ value 1, and the first 1000 positions in $\mathbf{h}^0_1$ whose values are between 0 to 1.
 
$\mathbf{h}^*_9$ takes the last 1000 positions in $\mathbf{h}^0_3$ value 1, and  the last 1000 positions in $\mathbf{h}^0_3$ whose values are between 0 to 1. 

$\mathbf{h}^*_{10}$ takes the last 300 positions in $\mathbf{h}^0_4$ where the values are 1, and  the last 1000 positions in $\mathbf{h}^0_4$ whose values are between 0 to 1. 
\end{itemize}
 \end{enumerate}

\begin{table} 
  \centering 
  \begin{tabular}{|c|c||c|c|c|c|c||c|c||c|}
  \hline
 & Noise level& Appro & app-$s_0$ & app-$s_1$ & app-$s_2$ & app-$s_3$ & appW&appH& Time(s)\\  
  \hline
  \multirow{4}*{QSPA}&0\% & 100& 100 & 100& 100& 100 & 100& 100&217.72\\
   \cline{2-10}&5\%& 93.64 & 95.75 & 87.11 & 87.90 & 92.52 & 90.57 & 77.67 &216.87\\
   \cline{2-10}&10\%&  86.09 & 90.72 &70.19& 74.48 & 83.68 & 75.75 & 50.28 &216.18\\
%    \cline{2-10}&30\%& 61.55& 73.37 &18.91&23.98&52.48&47.20&18.04&216.46\\
 \hline
 \hline
   \multirow{4}*{SPA$^*$}&0\% & 87.45&91.47&79.10&79.80&82.75&--&--&0.27\\
   \cline{2-10}&5\%& 73.56&86.35&44.85&44.52&74.00&45.62&12.85&0.23\\
   \cline{2-10}&10\%&70.78&83.72&42.54&37.92&70.13&46.27&14.19&0.24\\
 %   \cline{2-10}&30\%&54.66&69.62&3.57&11.76&46.05&29.50&4.14&0.26\\
 \hline
 \hline
    \multirow{4}*{QNMF}&0\% & 77.11&74.37&85.05&84.72&79.05&62.75&47.93&5633\\
   \cline{2-10}&5\%& 77.50&75.38&79.41&78.41&79.66&53.51&40.24&5614\\
   \cline{2-10}&10\%& 73.08&71.57&73.35&70.07&72.88&50.83&38.80&5601\\
%    \cline{2-10}&30\%&  65.68 &71.02& 37.80 & 39.67 & 59.13 &  47.56 & 31.32 &5609\\
 \hline
 \hline
  \multirow{4}*{ImQNMF}&0\% & 99.19 & 99.42 & 98.66 & 98.22& 99.14& 68.22& 56.82&13370\\
   \cline{2-10}&5\%&  94.93& 96.60& 89.81& 90.36& 93.99& 67.61&54.42&15013\\
   \cline{2-10}&10\%&  90.07 & 93.35& 79.77 & 80.95 & 88.29&72.75& 60.68&14973\\
%    \cline{2-10}&30\%&  71.39 & 80.11 & 39.45&43.07& 64.99 & 68.32& 49.31 &14847\\
 \hline
 \end{tabular}
 \caption{Numerical results (in percent) for 10 sources spectro-polarimetric Urban dataset ($r=10$).}\label{table:Urban10}
\end{table}

\begin{figure} 
 \includegraphics[height=5cm,width=0.5\textwidth]{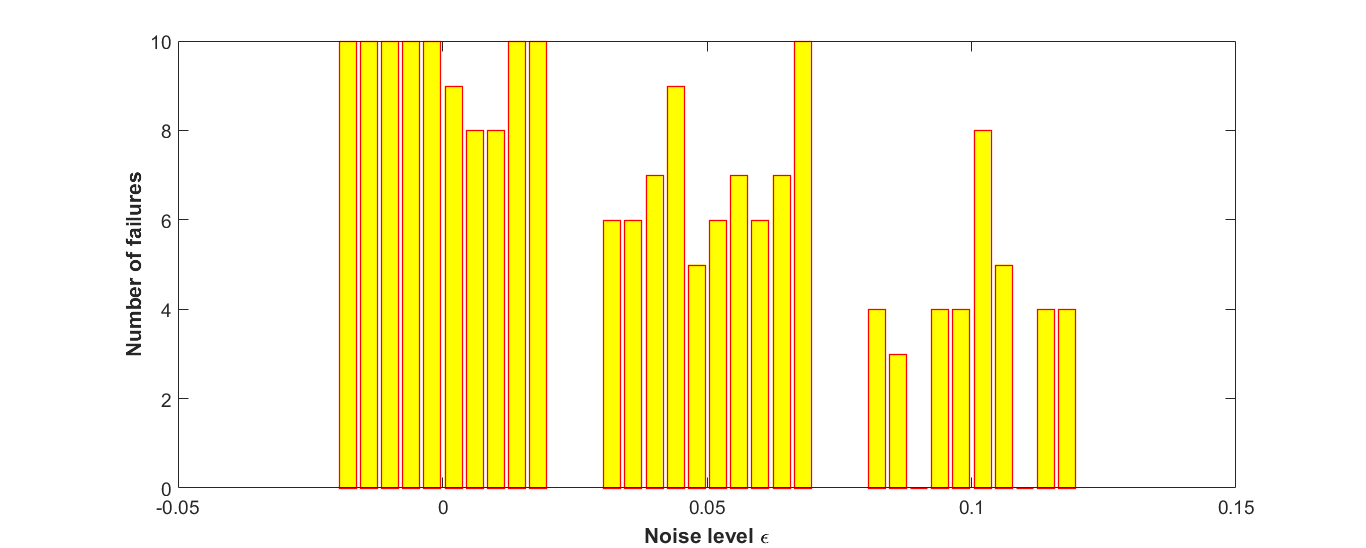}  \includegraphics[height=5cm,width=0.5\textwidth]{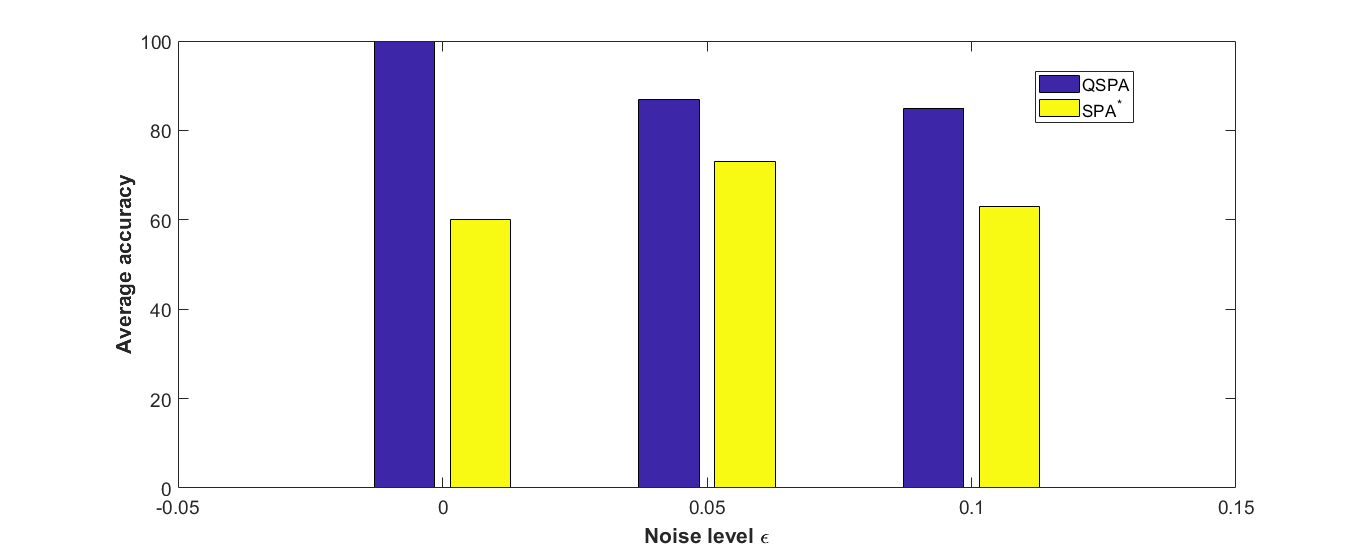}
 \caption{Urban: Left: The number of failures of QNMF; ~~~Right: Accuracy of QSPA and SPA$^*$ }\label{urban_acc}
 \end{figure}

We report the average quality measures in percent in Table \ref{table:Urban10} and observe:
\begin{itemize}
\item[$-$] In terms of the relative approximation on all components, only  QSPA  can achieve $100\%$ in a noiseless case.  The relative approximation of all methods decreases as the noise level increases. When the noise level $\epsilon \geq 5\%$,  ImQNMF has the best results. It is also nice to see that QSPA obtains the second best at the noise level $\epsilon = 5\%, 10\%$. We notice that SPA$^*$ does not perform well here. 
% At noise level $\epsilon = 30\%$,  QSPA does not perform as well as it at a lower noise level since the matrix $\breve{\mathbf{M}}$ is far from separable quaternion matrix. 

\item[$-$]  Regarding recovering the ground truth $(\breve{\mathbf{W}}^*,\mathbf{H}^*)$,  QSPA performs the best. We remark that in the noiseless case, QSPA 
can recover the factor matrices, while SPA$^*$ could not work because the rank of the real component of $\breve{\mathbf{M}}$ is 6, less than $r=10$, the number of the simulated sources.

\item[$-$] The average running time for all the methods are also presented. SPA$^{*}$ only needs average $0.2$ seconds, and QSPA takes around $216$ seconds. However QNMF takes more than 5600 seconds and ImQNMF needs more than 13000 seconds, much slower than the other two methods.
\end{itemize}

Since QNMF fails for some initial guesses in this ten sources spectro-polarimetric dataset, we report the number of its failures among ten initials for each generated matrix $\breve{\mathbf{M}}$ at all the noise levels in Fig. \ref{urban_acc}. We find that QNMF fails more frequently at a negligible noise level, thanks to the singularity of $\breve{\mathbf{W}}$ caused by too many zeros of $\mathbf{H}$ by QNLS. Especially in the noiseless case, it fails at 97 initials in 100 in total. On the right of Fig. \ref{urban_acc}, we report the identification accuracy of SPA$^*$ and QSPA  defined as follows.
\begin{equation} \label{accuracy}
  \text{accuracy} =
  \frac{|\mathcal{K}^* \cap\mathcal{K}| }{|\mathcal{K}^*|},
\end{equation}
where $\mathcal{K}^*$ is the true column indices of pure sources in $\breve{\mathbf{M}}^*$. It shows the proportion of column indices that are correctly identified. It is not surprising to see the accuracies of QSPA are better than SPA$^*$ at all levels of noise. 

The visual results under one arbitrary group of simulated polarimetric parameters $\{\bm{\alpha},\bm{\beta}\}$ are presented in Figs. \ref{urbandata}-\ref{S3_urban_r10}.
 The simulated spectro-polarimetric image data $\breve{\mathbf{M}}=\breve{\mathbf{W}}\mathbf{H}$ for three distinct wavelengths indices $t=30,90,150$ are shown in 
Fig.\ref{urbandata}.   The sources and the corresponding activations factors from all the methods at noise level $5\%$ are shown in Figs.\ref{H_urban_r10}-\ref{S3_urban_r10}. Visually from Fig. \ref{H_urban_r10}, it is exciting that QSPA can almost recover the activation matrix $\mathbf{H}$ at the  noise level $5\%$. From Figs. \ref{S0_urban_r10}-\ref{S3_urban_r10}, since QSPA and SPA$^*$ determine $\breve{\mathbf{W}}$ from the noisy data matrix $\breve{\mathbf{M}}$, the components of the source matrix $\breve{\mathbf{W}}$ from QSPA and SPA$^*$ have some noise compared to the ground truth $\breve{\mathbf{W}}^*$. Compared to the other three methods, visually, the components of the source matrix $\breve{\mathbf{W}}$ from QSPA are much closer to ground truth $\breve{\mathbf{W}}^*$. The visual results indicate that QSPA overperforms the others to get factor matrices $(\breve{\mathbf{W}},\mathbf{H})$, which verify the results shown in Table \ref{table:Urban10}. 
 
\begin{figure} 
\includegraphics[height=8cm,width=\textwidth]{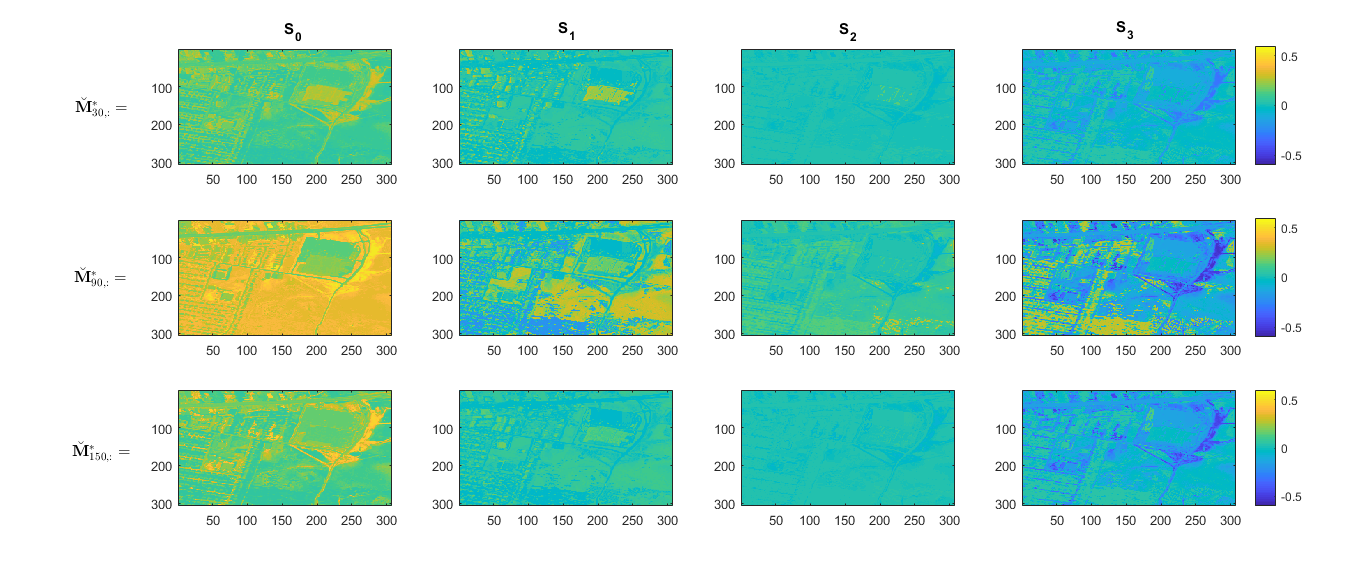}
\caption{2D intensity $\mathcal{S}_0(\breve{\mathbf{M}}^*)$ and polarization $\big(\mathcal{S}_1(\breve{\mathbf{M}}^*),\mathcal{S}_2(\breve{\mathbf{M}}^*),\mathcal{S}_3(\breve{\mathbf{M}}^*)\big)$ of wavelength indices $t=30,90,150$ in 10-sources Urban spectro-polarimetric matrix $\breve{\mathbf{M}}^*$ .}\label{urbandata}
\end{figure}

\begin{figure} 
\includegraphics[height=10cm,width=\textwidth]{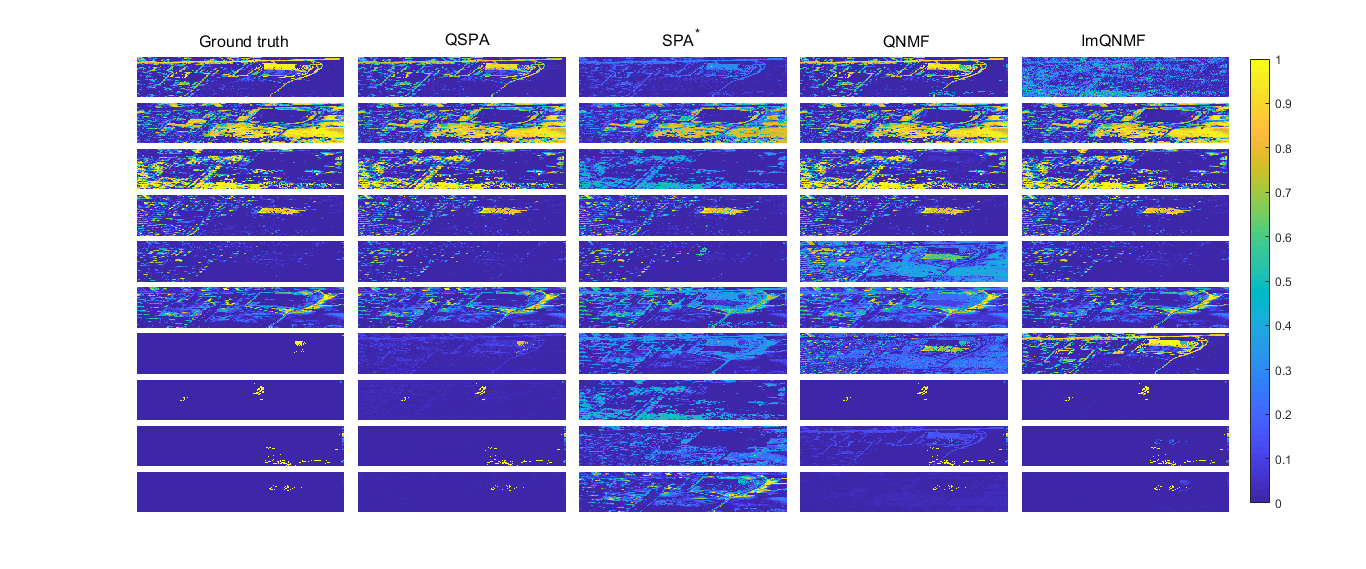}
\caption{The activation matrix $\mathbf{H}$ of all the methods in 10 sources spectro-polarimetric Urban dataset. }\label{H_urban_r10}
\end{figure}

\begin{figure} 
\includegraphics[height=9cm,width=\textwidth]{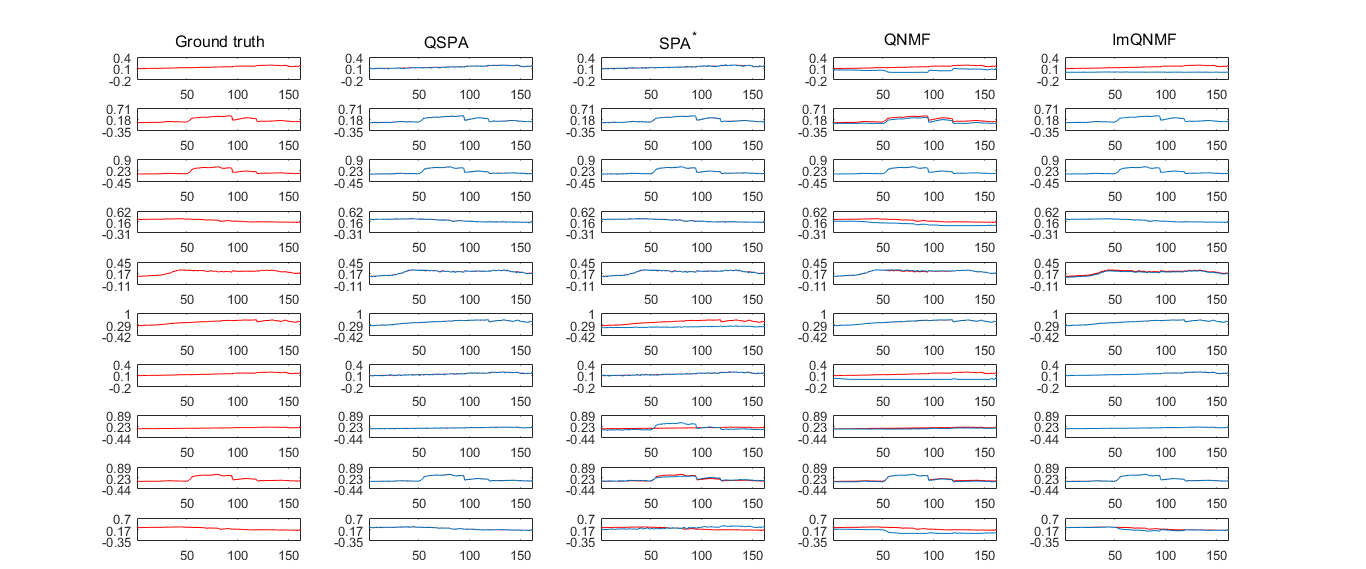}
\caption{$\mathcal{S}_0(\breve{\mathbf{W}})$ of all the methods in 10 sources spectro-polarimetric Urban dataset. The red line: ground truth; The blue line: the computed results.  }\label{S0_urban_r10}
\end{figure}

\begin{figure} 
\includegraphics[height=9cm,width=\textwidth]{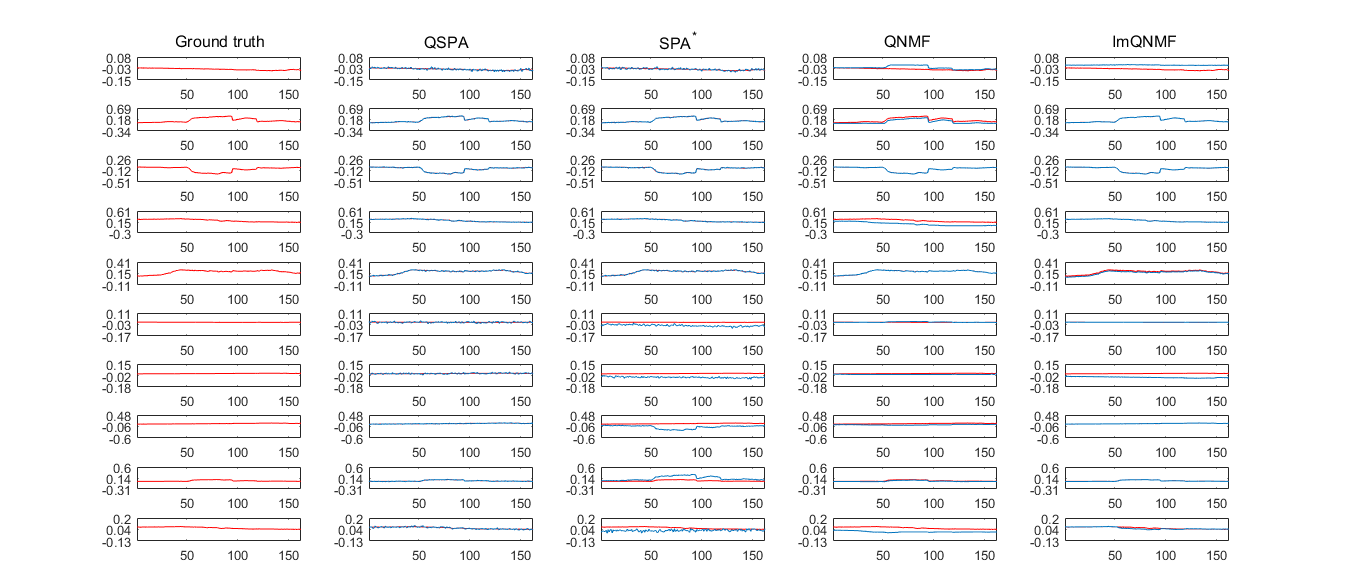}
\caption{$\mathcal{S}_1(\breve{\mathbf{W}})$ of all the methods in 10 sources spectro-polarimetric Urban dataset. The red line: ground truth; The blue line: the computed results.}\label{S1_urban_r10}
\end{figure}

\begin{figure} 
\includegraphics[height=9cm,width=\textwidth]{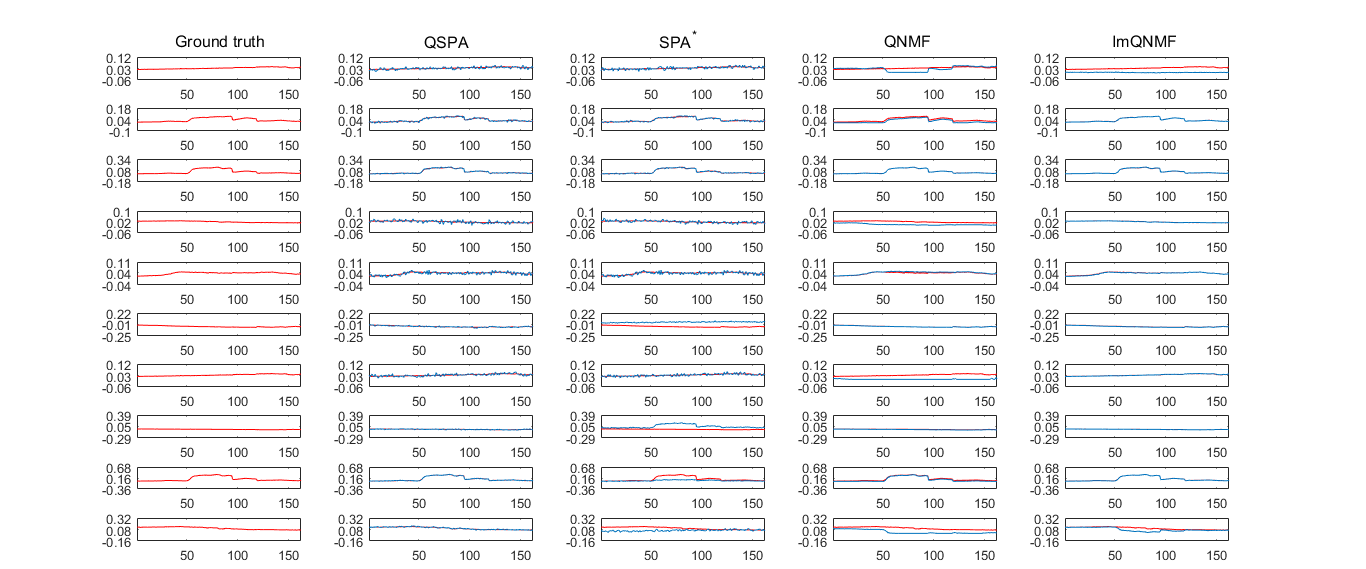}
\caption{$\mathcal{S}_2(\breve{\mathbf{W}})$ of all the methods in 10 sources spectro-polarimetric Urban dataset.  The red line: ground truth; The blue line: the computed results.}\label{S2_urban_r10}
\end{figure}

\begin{figure} 
\includegraphics[height=9cm,width=\textwidth]{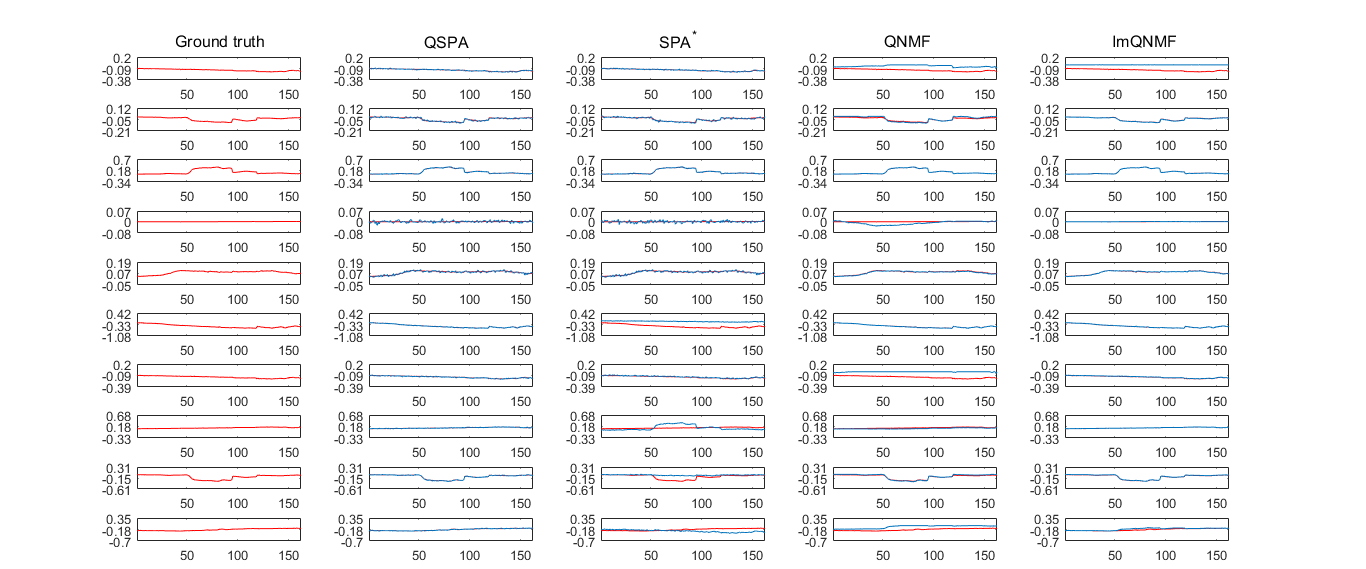}
\caption{$\mathcal{S}_3(\breve{\mathbf{W}})$ of all the methods in 10 sources spectro-polarimetric Urban dataset.  The red line: ground truth; The blue line: the computed results. }\label{S3_urban_r10}
\end{figure}

%We also test algorithms on a spectro-polarimetric data set simulated from Samson HSI dataset. The results are presented in Appendix.

We now make a conclusion on the advantages of QSPA in the application of spectro-polarimetric blind unmixing.  QSPA can well identify the underlying objects and sources from spectro-polarimetric data, even in the situation of the same intensity but foreign objects. Its computational time is competitive.

\section{Conclusion}
In this paper, we introduced separability into the quaternion matrix factorization and referred the problem to as separable quaternion  matrix factorization (SQMF). We studied some properties of SQ-matrices that can be decomposed by SQMF. We showed SQMF is unique up to scaling and permutations.  To solve quaternion-valued sources of SQMF, we proposed a fast and efficient method called quaternion successive projection algorithm (QSPA) extended from the successive projection algorithm (SPA). We have demonstrated that QSPA can correctly identity the quaternion-valued sources factor in noiseless cases. To compute the activation factor matrix, we provided a simple algorithm named quaternion hierarchical nonnegative least squares (QHNLS). We tested the algorithms on polarization image for image representation, and simulated spectro-polarimetric data set for blind unmixing.  The numerical results showed that the proposed method works promisingly.

Further work include a robust analysis of QSPA, and to design more efficient algorithms for the SQMF problem in the presence of noise. 

 \appendix
  
 \section{Appendix}
\subsection{Visual Results for Polarization Image Representation} 

Figs. \ref{block-selected30}-\ref{block-selected50} show $r=30,50$ important blocks determined by QSPA and SPA$^*$ respectively.  Visualization of reconstructed images by $r=30, 50$ features are shown in Figs.\ref{polarresult30}- \ref{polarresult50}. 

\begin{figure} 
\includegraphics[height=5cm,width=\textwidth]{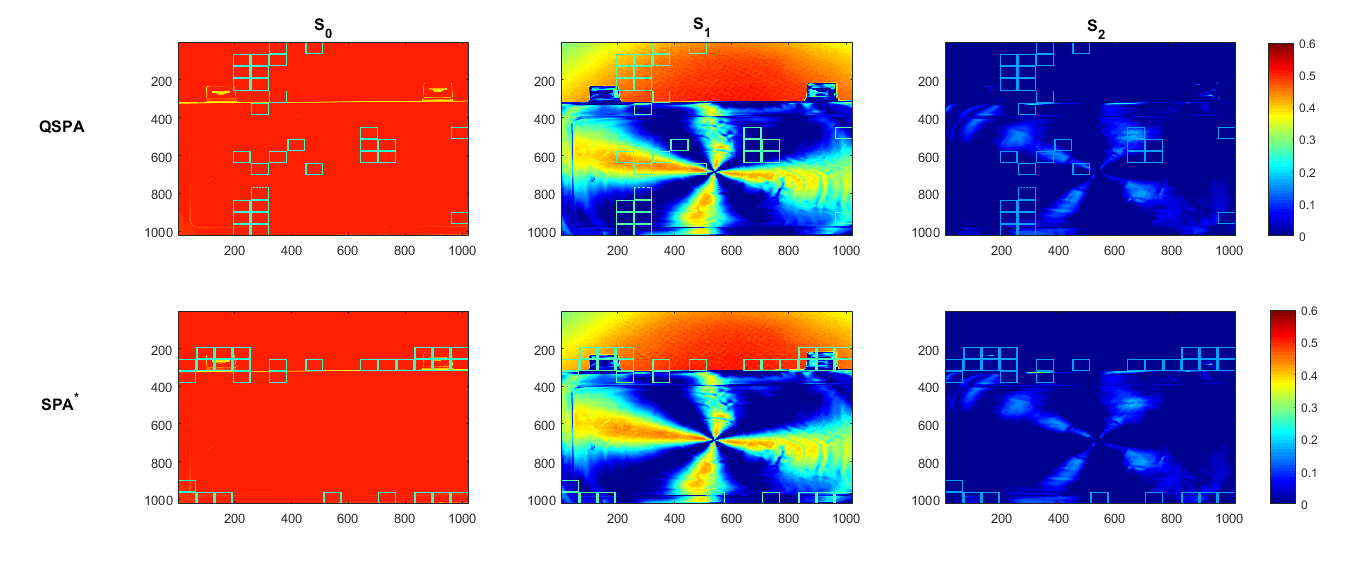}
\caption{30 key blocks identified by QSPA and SPA$^*$.}\label{block-selected30}
\end{figure}

\begin{figure} 
\includegraphics[height=8cm,width=\textwidth]{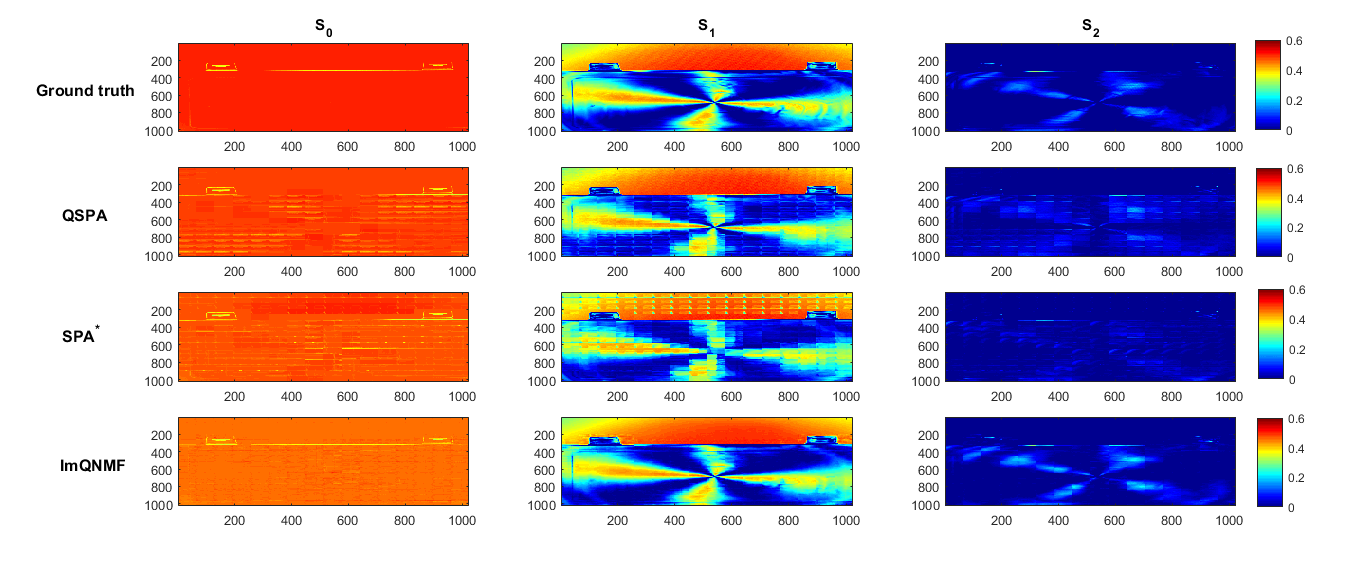}
\caption{Reconstruction results $(\mathcal{S}_0(\breve{\mathbf{T}}),\mathcal{S}_1(\breve{\mathbf{T}}),\mathcal{S}_2(\breve{\mathbf{T}}))$ of all methods $(r=30)$.}\label{polarresult30}
\end{figure}

\begin{figure} 
\includegraphics[height=5cm,width=\textwidth]{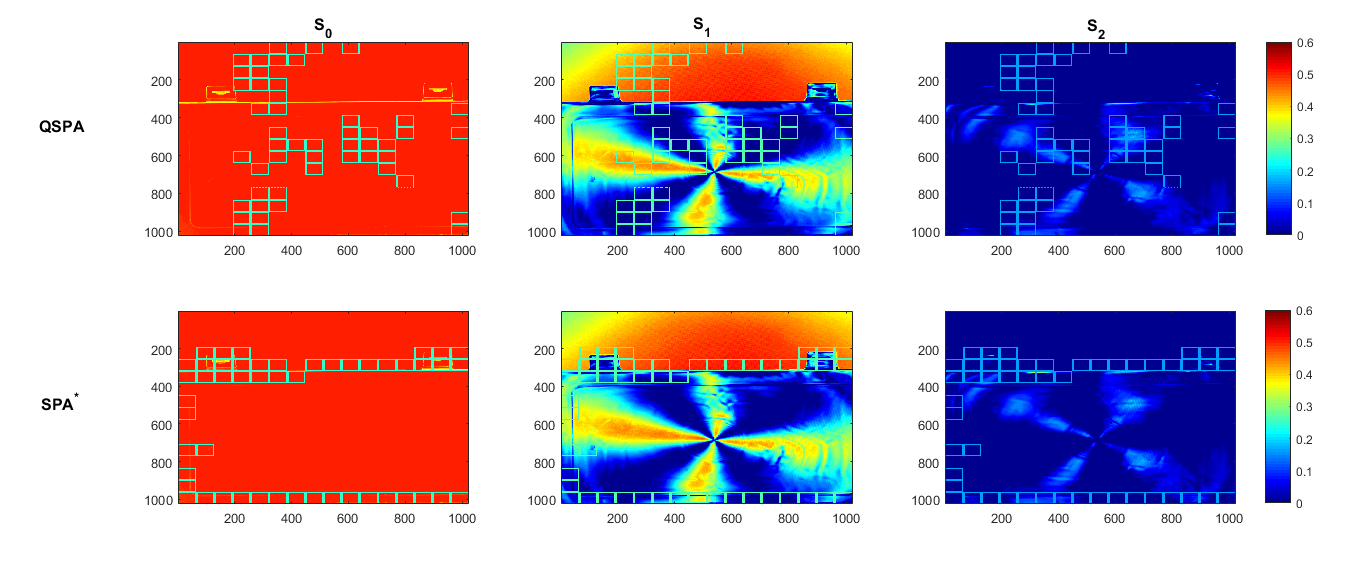}
\caption{50 key blocks identified by QSPA and SPA$^*$ .}\label{block-selected50}
\end{figure}

\begin{figure} 
\includegraphics[height=8cm,width=\textwidth]{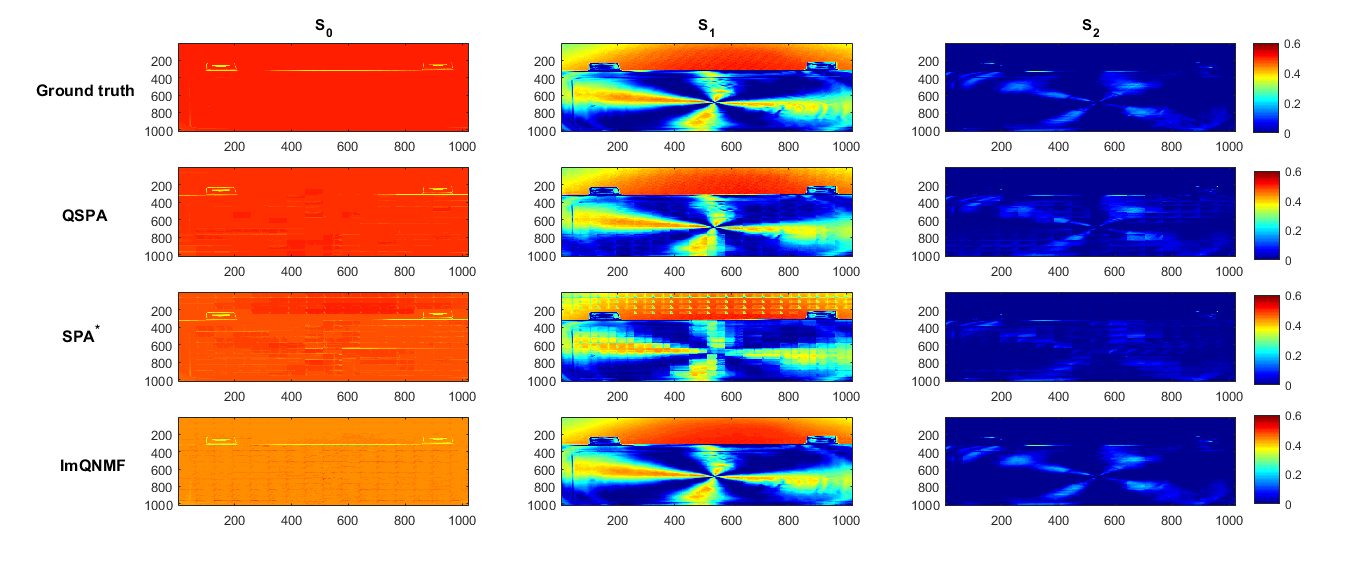}
\caption{Reconstruction results $(\mathcal{S}_0(\breve{\mathbf{T}}),\mathcal{S}_1(\breve{\mathbf{T}}),\mathcal{S}_2(\breve{\mathbf{T}}))$ of all methods $(r=50)$. }\label{polarresult50}
\end{figure}

 \subsection{Visual  Results on 6 sources Spectro-Polarimetric Urban Dataset}

We show visual results for one arbitrary group of simulated polarimetric parameters $\{\bm{\alpha},\bm{\beta}\}$ in Figs. \ref{urbandata_r6}-\ref{S3_urban_r6}. The simulated-polarimetric data $\breve{\mathbf{M}}^*=\breve{\mathbf{W}}^*\mathbf{H}^*$ at three distinct wavelengths indices $t=30,90,150$ are presented in 
Fig. \ref{urbandata_r6}. 
 We present the 6 sources and the corresponding activations factors from all the methods at the noise level $5\%$ in Figs. \ref{H_urban_r6}-\ref{S3_urban_r6}.  

\begin{figure} 
\includegraphics[height=8cm,width=\textwidth]{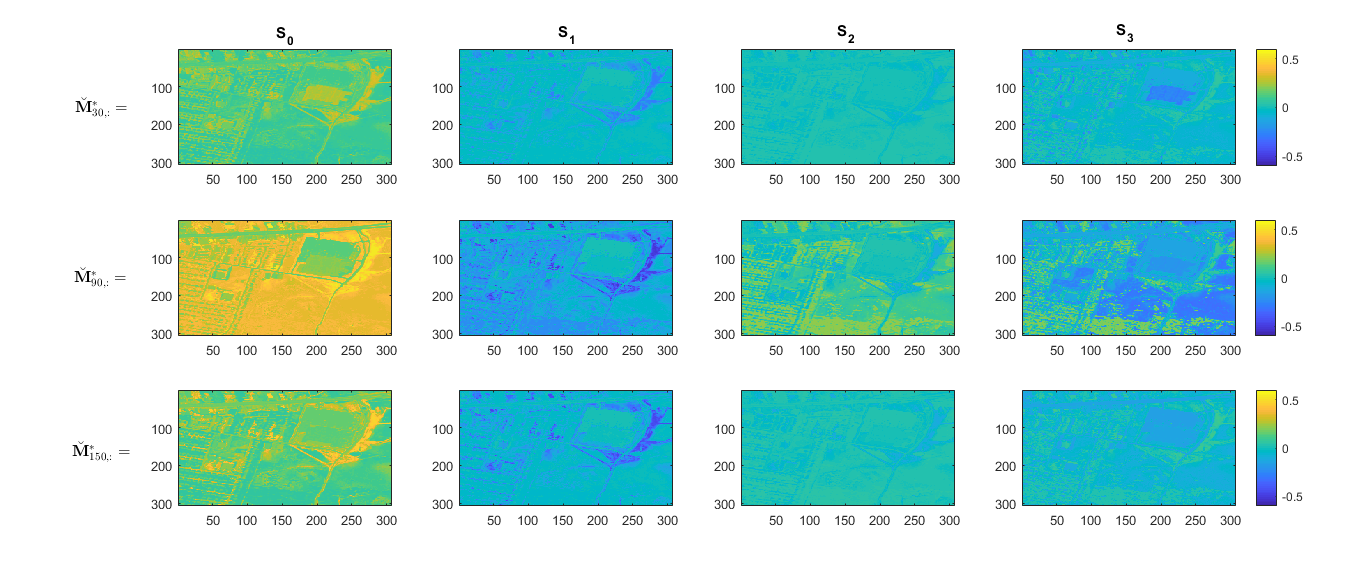}
\caption{2D intensity $\mathcal{S}_0(\breve{\mathbf{M}}^*)$ and polarization $\big(\mathcal{S}_1(\breve{\mathbf{M}}^*),\mathcal{S}_2(\breve{\mathbf{M}}^*),\mathcal{S}_3(\breve{\mathbf{M}}^*)\big)$ of wavelength indices $t=30,90,150$ in 6-sources Urban spectro-polarimetric matrix $\breve{\mathbf{M}}^*$.  }\label{urbandata_r6}
\end{figure}

\begin{figure} 
\includegraphics[height=8cm,width=\textwidth]{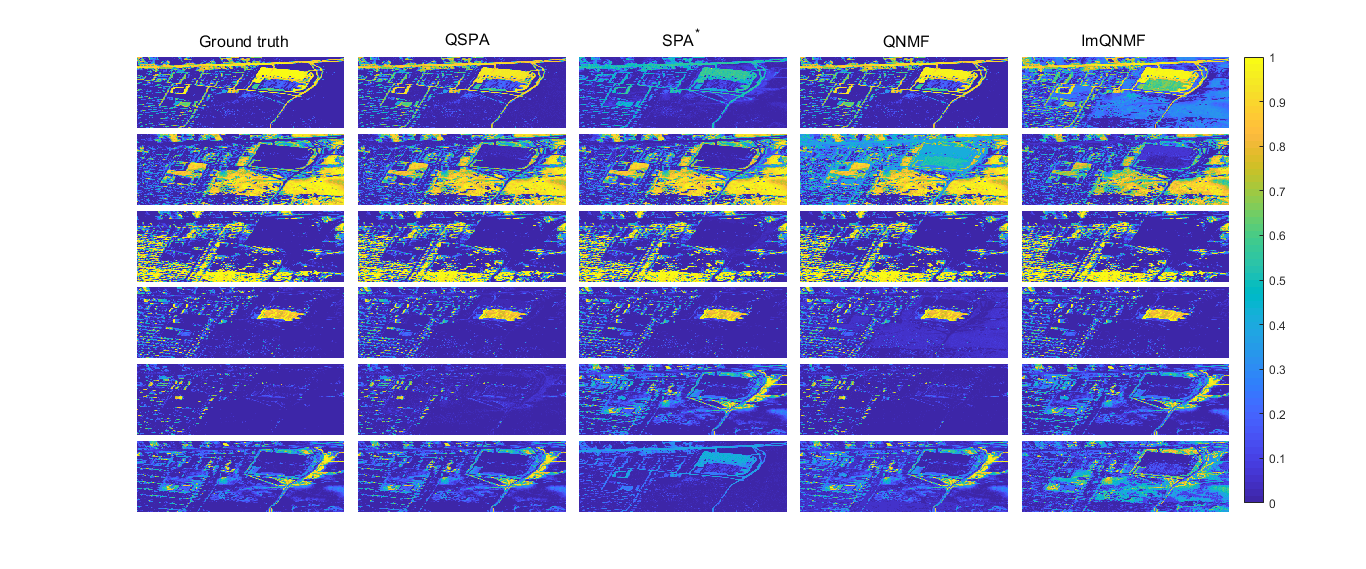}
\caption{The activation matrix $\mathbf{H}$ of all the methods in 6 sources spectro-polarimetric Urban dataset.}\label{H_urban_r6}
\end{figure}

\begin{figure} 
\includegraphics[height=8cm,width=\textwidth]{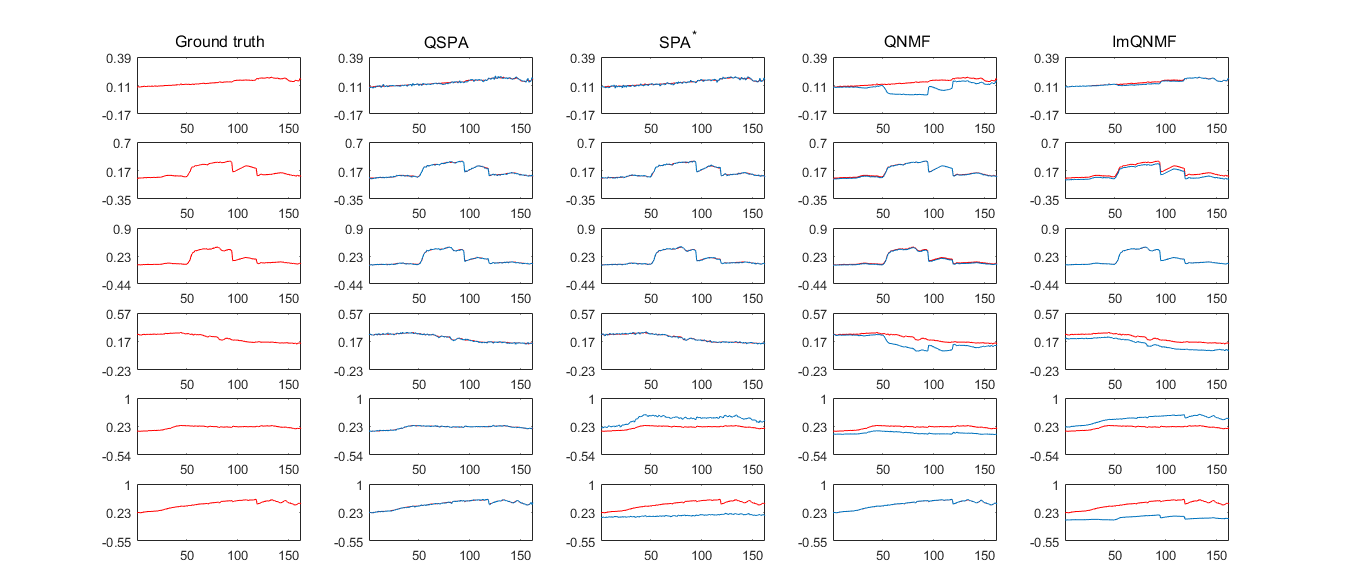}
\caption{$\mathcal{S}_0(\breve{\mathbf{W}})$ of all the methods in 6 sources spectro-polarimetric Urban dataset. The red line: ground truth; The blue line: the computed results. }\label{S0_urban_r6}
\end{figure}

\begin{figure} 
\includegraphics[height=8cm,width=\textwidth]{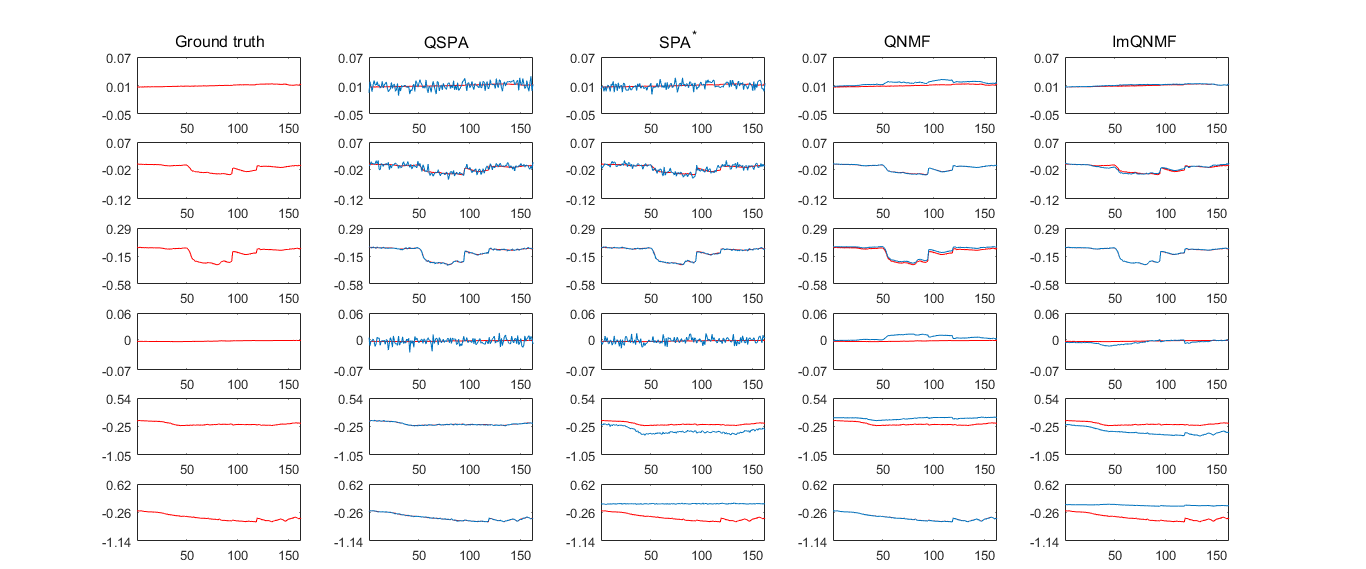}
\caption{$\mathcal{S}_1(\breve{\mathbf{W}})$ of all the methods in 6 sources spectro-polarimetric Urban dataset. The red line: ground truth; The blue line: the computed results.}\label{S1_urban_r6}
\end{figure}

\begin{figure} 
\includegraphics[height=8cm,width=\textwidth]{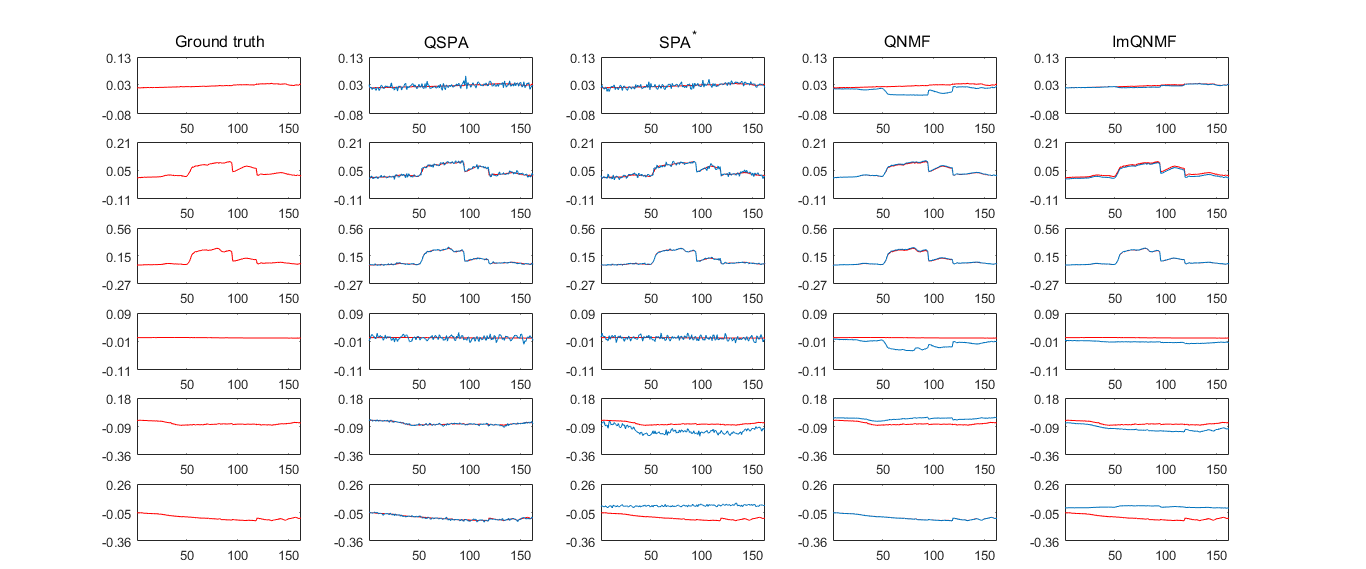}
\caption{$\mathcal{S}_2(\breve{\mathbf{W}})$ of all the methods in 6 sources spectro-polarimetric Urban dataset.  The red line: ground truth; The blue line: the computed results.}\label{S2_urban_r6}
\end{figure}

\begin{figure} 
\includegraphics[height=8cm,width=\textwidth]{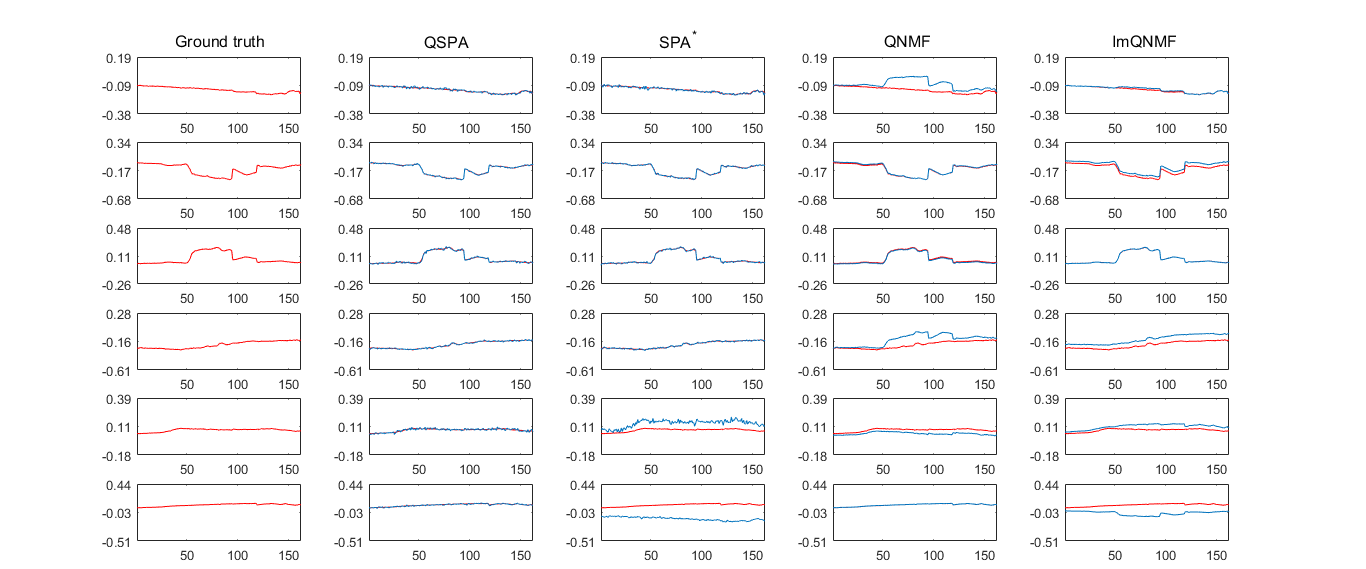}
\caption{$\mathcal{S}_3(\breve{\mathbf{W}})$ of all the methods in 6 sources spectro-polarimetric Urban dataset. The red line: ground truth; The blue line: the computed results.}\label{S3_urban_r6}
\end{figure}

%It is nice to observe in these figures that QSPA performs very well, obtaining better factors than the others. This validates the results in Table \ref{table:Urbansource06}. Since QSPA and SPA$^*$ determine $\breve{\mathbf{W}}$ from the noisy data matrix $\breve{\mathbf{M}}$, that leads to the results that the source matrices $\breve{\mathbf{W}}$ from both methods have noise compared to the ground truth $\breve{\mathbf{W}}^*$. However, it is  good to see that the $\breve{\mathbf{W}}$ from QSPA is very close to  ground truth $\breve{\mathbf{W}}^*$. 

\bibliographystyle{abbrv}
\bibliography{nmfref,ref} 
 \end{document}